\documentclass{article}

\usepackage[preprint]{neurips_2026}

\usepackage[utf8]{inputenc} 
\usepackage[T1]{fontenc}    
\usepackage[hidelinks]{hyperref}       
\usepackage{url}            
\usepackage{booktabs}       
\usepackage{amsfonts}       
\usepackage{nicefrac}       
\usepackage{microtype}      
\usepackage{xcolor}         
\usepackage{enumitem}
\usepackage{graphicx}
\usepackage{needspace}  

\usepackage{etoc}

\usepackage{amsmath}
\usepackage{amssymb}
\usepackage{mathtools}
\usepackage{amsthm}
\usepackage{bbm}
\DeclareRobustCommand{\mb}[1]{\boldsymbol{#1}}

\DeclareMathOperator*{\argmin}{arg\,min}

\renewcommand{\mid}{~\vert~}

\newcommand{\iid}[1]{\stackrel{\text{iid}}{#1}}

\newcommand{\mbb}{\mb{b}}
\newcommand{\mbc}{\mb{c}}

\newcommand{\mbe}{\mb{e}}

\newcommand{\mbh}{\mb{h}}

\newcommand{\mbm}{\mb{m}}

\newcommand{\mbu}{\mb{u}}
\newcommand{\mbv}{\mb{v}}

\newcommand{\mbx}{\mb{x}}
\newcommand{\mby}{\mb{y}}
\newcommand{\mbz}{\mb{z}}

\newcommand{\E}{\mathbb{E}}

\usepackage{wrapfig}
\usepackage{caption}   
\usepackage{algorithm}
\usepackage{algorithmic}
\newcommand{\LeftComment}[1]{\STATE  {$\triangleright$ #1}}

\usepackage[capitalize,noabbrev]{cleveref}

\theoremstyle{plain}
\newtheorem{theorem}{Theorem}[section]
\newtheorem{proposition}[theorem]{Proposition}

\theoremstyle{definition}

\theoremstyle{remark}

\usepackage{xspace} 
\newcommand{\methodname}{\texttt{kCGM}\xspace}
\newcommand{\CGM}{\texttt{CGM}\xspace}

\newcommand{\thetapre}{\theta_{\mathrm{pre}}}
\newcommand{\ppre}{p_{\thetapre}}
\newcommand{\ytarg}{\mby^{\mathrm{targ}}}
\newcommand{\logp}{\log p}

\title{
Calibrating Generative Models to Feature Distributions with MMD Finetuning
}

\author{%
  Nathaniel L.~Diamant \\
  Stanford University \\
  \texttt{diamant@stanford.edu} \\
  \And
  Brian L.~Trippe \\
  Stanford University \\
  \texttt{btrippe@stanford.edu} \\
}

\begin{document}

\maketitle

\begin{abstract}
Generative models can produce individually plausible samples while deviating substantially from a target set in the distribution of key features.
For example, a model pretrained on broad drug-like chemical space may generate molecules whose molecular features differ from those of a therapeutic class of interest, such as known antibiotics.
Correcting such distributional \emph{miscalibration} is challenging: direct finetuning on the target set can overfit and does not control which features are matched.
To fill this gap, we introduce kernel Calibrating Generative Models (\methodname).
\methodname minimizes a maximum mean discrepancy (MMD) between generated and target feature distributions using an unbiased score-function estimator, with KL regularization to remain close to the pretrained model.
On a target set of 174 antibiotics, direct finetuning sacrifices chemical validity for feature-distribution matching, whereas \methodname improves target feature matching while increasing validity.
We further demonstrate \methodname in protein and DNA generation tasks, showing it can adapt autoregressive, continuous-space diffusion, and discrete diffusion models using only feature-level supervision.
Code is available at \url{https://github.com/smithhenryd/cgm}.
\end{abstract}

\section{Introduction}
Generative models of molecules frequently suffer misalignment of the distribution of features of interest.
Consider antibiotic drug discovery: models trained on large, diverse small-molecule datasets can generate plausible molecules whose chemical-group distributions differ substantially from known antibiotics.
Finetuning directly on known antibiotics risks overfitting and cannot specify which features of the target set should be matched.
More broadly, aligning a generative model to a target feature distribution is a recurring goal in biomolecular generative modeling: protein generators fail to match natural secondary-structure distributions \citep{lu2025assessing},
and regulatory DNA generators fail to reproduce realistic activity-profile distributions \citep{sarkar2024designing}.
While aligning, or \emph{calibrating}, these models to match the distribution of such features to a reference set could enable more faithful modeling and design of biomolecules, satisfactory finetuning algorithms do not exist.

\begin{figure}[t]
\centering
\includegraphics[width=0.99\columnwidth]{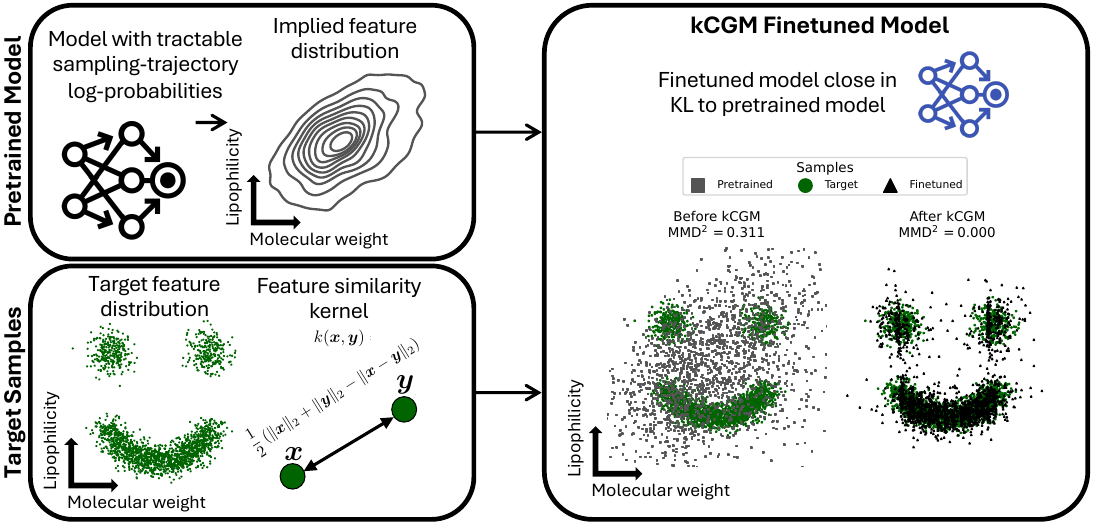}
    \caption{
\methodname overview. The user specifies a pretrained model with tractable 
sampling-trajectory log-probabilities (top left) and a target feature distribution through samples in feature space (bottom left). \methodname compares these distributions using a kernel similarity between feature samples and finetunes the model, with KL regularization toward the pretrained model. The result is a finetuned model whose generated samples more closely match the target feature distribution (right).
    }
    \label{fig:overview}
    \vspace{-0.4  cm}
\end{figure}

An ideal finetuning method for calibrating feature distributions should satisfy two properties.
First, it should work from feature-level supervision, matching selected aspects of a target distribution without requiring full target samples needed for direct finetuning.
Second, it should allow discrete-valued and black-box features such as the bit-vector Morgan fingerprints commonly used in small-molecule drug discovery.

To satisfy these goals, we introduce kernel Calibrating Generative Models (\methodname), a method for finetuning generative models to match target feature distributions from feature-space samples alone.
\methodname extends Calibrating Generative Models (\CGM) \citep{smith2025calibrating}, which finetunes a pretrained model so that feature \emph{means} match target values.
Like \CGM, \methodname applies when one can sample from the model and compute score-function gradients of the sampling trajectory, while allowing the feature map to be black-box or non-differentiable.
We show an overview of \methodname in \Cref{fig:overview}.
Rather than discretizing the target distribution as one would with \CGM, \methodname minimizes an unbiased finite-sample estimator of the kernel Maximum Mean Discrepancy (MMD), enabling more favorable scaling to higher-dimensional features and tighter agreement with the target distribution.

This paper presents three contributions.
First, we formulate feature-distribution matching as KL-regularized MMD finetuning from target feature samples alone.
Second, we derive an unbiased score-function gradient estimator for the resulting objective, enabling non-differentiable feature maps and non-reparameterizable samplers.
Finally, across protein, small-molecule, and regulatory DNA generation tasks, \methodname outperforms \CGM over the tested KL range and outperforms direct finetuning on the antibiotic task while improving chemical validity.

\subsection{Related work}

Several lines of work study how to adapt a generative model to a target population or task.
We discuss the most relevant: direct finetuning on samples, reward finetuning, \CGM, and prior uses of MMD in generative modeling.
No prior work has directly addressed the task of finetuning a generative model to match a target distribution over user-specified features.

\paragraph{Direct/supervised finetuning.}
A standard way to adapt a generative model is to continue likelihood training on a smaller, more relevant target dataset \citep{yosinski2014transferable}.
This two-stage procedure, often called supervised finetuning, is widely used to adapt pretrained models to task-specific data \citep{chen2021evaluating,ouyang2022training}.
In drug design, models are pretrained on large, broad molecular datasets and then finetuned on molecules with a desired activity~\citep{segler2018generating}.
While direct finetuning can make generations more target-like, it requires full target samples, can overfit small target sets, and cannot specify which aspects of the target distribution should be preserved.

\paragraph{Reward finetuning.} Reward finetuning methods adapt a generative model by assigning a scalar reward to each generated sample and optimizing the model to increase expected reward, e.g. \citet{christiano2017deep, rafailov2023direct, uehara2024fine, domingo2025adjoint, zheng2025diffusionnft}.
In contrast, \methodname is designed for settings where the goal is to match a target feature distribution rather than maximize a reward assigned to individual samples.

\paragraph{Calibrating Generative Models (CGM).}
\methodname generalizes CGM \citep{smith2025calibrating}, the prior method closest to the present work.
Whereas CGM matches target feature means, \methodname matches feature distributions through an MMD.
The special case of \methodname with the dot-product kernel recovers CGM-relax, a variant of CGM; Appendix~\Cref{app_subsec:kCGM_CGM} formalizes this observation and provides a proof.

\citet{smith2025calibrating} show \CGM can be extended toward distribution matching using a discretization of the empirical multivariate cumulative distribution function (CDF) of target set features.
However, this introduces discretization hyperparameters and, because the CDF representation grows exponentially with dimension, is impractical for features of dimension larger than three.
Moreover, in application this approach is shown to leave significant miscalibration error.

\paragraph{MMD in generative modeling.}
MMD has many uses in training and finetuning generative models, including DISCO Nets \citep{NIPS2016_disconets}, MMD-GAN \citep{li2017mmd}, and MMD-based losses for diffusion models \citep{de2025distributional,de2026flow}.
These works rely on reparameterization gradients, which are not directly applicable when sampling or the features of interest are non-differentiable.

Score-function estimators have also been used to optimize MMD objectives in non-reparameterizable settings.
\citet{masood2019diversity} use policy-gradient estimators for an MMD diversity term between policy trajectory distributions, and \citet{manenti2025leaning} use a score-function estimator for an MMD loss between stochastic predictive output distributions in graph structure learning.
\citet{miao2024training} use an MMD-based signal to finetune image diffusion models, but target diversity maximization rather than exact feature-distribution matching, and rely on a heuristic decomposition of set-level rewards into per-sample rewards.
In contrast, \methodname finetunes pretrained generative models to match externally specified target feature distributions, incorporates KL regularization to the pretrained model, and derives the finite-sample MMD coefficients and leave-one-out baselines needed for practical score-function optimization.

\section{Feature distribution matching with MMD}\label{sec:methods}
This section presents \methodname.
We first make precise our notation and assumptions and then present our MMD objective, regularization, and  finetuning algorithm.

\paragraph{Notation and assumptions.}
Let $p_{\theta_\mathrm{pre}}(\mbx)$ be a pretrained generative model with parameters $\theta_{\mathrm{pre}}$, and let $\mby=\mbh(\mbx)$ denote features of a generated sample in a set $\mathcal{Y}$.
In addition, we have examples of features
$
    \ytarg_{1:N} := (\ytarg_1,\dots,\ytarg_N),
$
from a target distribution.

Our goal is to choose $\theta$ so that if $\mbx \sim p_\theta$, then the induced feature distribution of $\mbh(\mbx)$ matches a target distribution.
The features may be extracted by a black-box, non-differentiable procedure.

We assume samples $\mbx \sim p_\theta$ and their log-probabilities may be readily drawn and computed, respectively;
this assumption enables score-function gradients \citep{williams1992simple}.
Here, $\mbx$ may denote the full sampling trajectory rather than only the terminal generated object.
For an autoregressive model, $\mbx$ is the generated token sequence and $\logp_\theta(\mbx)$ is the usual autoregressive log-probability.
For a diffusion sampler, $\mbx$ includes the intermediate reverse-process states, so $\logp_\theta(\mbx)$ decomposes into transition log-probabilities along the sampled denoising path.
The feature map $\mbh$ acts on the terminal generated sample of the trajectory, such as the fully denoised output of a diffusion sampler.
Thus the score-function estimator does not require differentiating through $\mbh$ or the sampling procedure.


\paragraph{MMD objective and the kernel choice.}
To compare feature distributions from samples, we use a maximum mean discrepancy (MMD).
For distributions $P$ and $Q$ and a positive definite kernel $k(\cdot,\cdot)$, the population squared MMD is
\begin{equation}\label{eqn:pop_MMD}
\operatorname{MMD}^2_k(P,Q)
=
\mathbb{E}_{\mby,\mby' \sim P}[k(\mby,\mby')]
- 2\mathbb{E}_{\mby \sim P, \mby' \sim Q}[k(\mby,\mby')]
+ \mathbb{E}_{\mby,\mby' \sim Q}[k(\mby,\mby')] .
\end{equation}
In our setting, $P$ is the model-induced feature distribution of $\mbh(\mbx)$ for $\mbx \sim p_\theta$, and $Q$ is the empirical distribution of target sample features $\ytarg_{1:N}$.



The kernel $k$ determines how differences between feature distributions contribute to the MMD.
For example, the MMD defined by the
dot-product kernel $k(\mbu,\mbv) = \mbu^\top \mbv$
compares only means.
In fact, with the dot-product kernel, $\operatorname{MMD}^2_k(P,Q)$ exactly recovers the \CGM-relax objective
and so does not distinguish distributions with identical means, even if their higher-order structure differs.
Appendix~\Cref{app_subsec:kCGM_CGM} provides details.

By contrast, we consider kernels for which the MMD defines a metric on probability distributions that is equal to zero only for identical distributions.
For continuous Euclidean features, we use the energy-distance kernel,
\begin{equation}\label{eq:energy_kernel}
k(\mbu, \mbv) = \frac{1}{2}\left(\|\mbu\|_2 + \|\mbv\|_2 - \|\mbu-\mbv\|_2 \right).
\end{equation}
In our small-molecule experiments with bit-vector fingerprints as features (\Cref{subsec:g2pt}), we use the Tanimoto / Jaccard kernel \citep{bouchard2013proof}, 
\begin{equation}\label{eq:tanimoto_kernel}
k(\mbu, \mbv) = \frac{\mbu^\top \mbv}{\|\mbu\|_2^2 + \|\mbv\|_2^2 - \mbu^\top \mbv}.
\end{equation}
Both kernels have the advantage that they do not introduce an additional bandwidth hyperparameter, unlike radial basis function kernels for example.
\citet[Chapter 2]{duvenaud2014automatic} provides an overview of common kernel choices that might be preferable in other settings.

\paragraph{Training with the MMD objective.}
An attractive feature of MMD for finetuning feature distributions is that the expectations in the population estimand in \Cref{eqn:pop_MMD} may be approximated without bias by sample averages.
The squared MMD has the unbiased estimator
\begin{equation}
\label{eq:sample_MMD}
\begin{aligned}
    \widehat{\operatorname{MMD}}^2_k[\mby_{1:M}, \ytarg_{1:N}]
    = \;& \underbrace{
        \frac{1}{M(M-1)} \sum_{m'\neq m} k(\mby_m, \mby_{m'})
    }_{\text{Discourage mode collapse}}
 - \underbrace{
        \frac{2}{MN}\sum_{m=1}^M\sum_{n=1}^N k(\mby_m, \ytarg_n)
    }_{\text{Encourage similarity to target features}}
    + C_{\mathrm{targ}},
\end{aligned}
\end{equation}
where
$C_{\mathrm{targ}} =
\frac{1}{N(N-1)} \sum_{n' \neq n}
k(\ytarg_n, \ytarg_{n'})$
depends only on target features, and so may be ignored in the optimization.

\begin{wrapfigure}[28]{r}{0.48\textwidth} 
\vspace{-1.1em}
\captionsetup{type=algorithm} 
\caption{\methodname finetuning} \label{alg:kcgm} \vspace{-0.5em} 
\noindent\rule{\linewidth}{0.8pt} 
\vspace{-1em} 
\begin{algorithmic}
    \STATE \textbf{Input}: Pretrained model $\ppre$, feature map $\mbh(\cdot)$, target features $\ytarg_{1:N}$, kernel $k(\cdot,\cdot)$, batch size $M$, KL weight $\lambda$.
    \vspace{0.07cm}

    \LeftComment{Initialize and optimize}
    \STATE $p_\theta \leftarrow \ppre$
    \vspace{0.2cm}

    \WHILE{not converged}
        \LeftComment{Sample and compute KL coefficients}
        \STATE $\mbx_1,\dots,\mbx_M \overset{\mathrm{i.i.d.}}{\sim} p_{\texttt{stop-grad}(\theta)}$
        \STATE $\begin{aligned}[t]
            l_m \leftarrow {}&
            \bigl{(} \logp_{\texttt{stop-grad}(\theta)}(\mbx_m) \\
            &{}- \logp_{\theta_{\mathrm{pre}}}(\mbx_m)\bigr{)}
        \end{aligned}$
        \STATE $c_m^{\mathrm{KL}} \leftarrow \frac{1}{M} \texttt{KL-LOO}_m(l_{1:M})$

        \vspace{0.2cm}
        \LeftComment{Kernel discrepancy coefficients}
        \STATE $\mby_m \leftarrow \mbh(\mbx_m)$
        \STATE $s_m \leftarrow \frac{1}{M-1}\sum_{m' \neq m} k(\mby_m,\mby_{m'})$
        \STATE $r_m \leftarrow \frac{1}{N}\sum_{n=1}^N k(\mby_m,\ytarg_n)$
        \STATE $c_m^{\mathrm{k}} \leftarrow \frac{2}{M} \bigl(s_m - r_m\bigr)$
        \STATE $c_m^{\mathrm{k,LOO}} \leftarrow \texttt{MMD-LOO}_m(c_{1:M}^{\mathrm{k}})$

        \vspace{0.2cm}
        \LeftComment{Total coefficients and update}
        \STATE $c_m \leftarrow \lambda c_m^{\mathrm{KL}} + c_m^{\mathrm{k,LOO}}$
        \STATE $\widehat{\mathcal L}^{\mathrm{\methodname\text{-}surrogate}} \leftarrow \sum_{m=1}^M c_m \logp_\theta(\mbx_m)$
        \STATE $\theta \leftarrow \textrm{gradient-step}(\theta, \nabla_\theta \widehat{\mathcal L}^{\mathrm{\methodname\text{-}surrogate}})$
    \ENDWHILE
\end{algorithmic}
\end{wrapfigure}

\paragraph{Score gradients of the MMD loss with leave-one-out baseline to reduce variance.}
Relating the sample-based discrepancy estimate in \Cref{eq:sample_MMD} to practical gradient estimates requires care.
Reparameterization gradients can provide low-variance gradient estimates when both $\mbh$ and the sampling procedure are differentiable. 
However, in the present setting where we assume only that we can tractably sample and compute $\log p_\theta(\mbx)$, score-function gradients are a natural choice.

\Cref{alg:kcgm} and \Cref{app_sec:kCGM_derivations} present and derive the score-gradient updates.
A key component is the inclusion of a novel \emph{leave-one-out baseline}, a Monte Carlo control variate used to reduce variance in score-function gradient estimates \citep[Ch. 13]{sutton2018reinforcement}.
Standard leave-one-out baselines use the average coefficient from the other samples in the batch \citep{kool2019buy}.
However, this approach is not directly applicable to the MMD loss because the coefficients are coupled across samples, so directly averaging them would introduce bias.
Intuitively, the baseline should approximate the average coefficient that would have been obtained had $\mbx_m$ not appeared in the minibatch.
\Cref{alg:kcgm} uses a baseline constructed along this intuition, denoted \texttt{MMD-LOO}, and \Cref{app_subsec:loo} provides the mathematical expression and derivation.

\paragraph{KL regularization and the complete algorithm.}
In addition to the squared MMD loss, we regularize KL divergence to the pretrained model $D_{\operatorname{KL}}(p_\theta || \ppre)$ with weight hyperparameter $\lambda$ to ensure the finetuned model does not deviate too far from the pretrained model.
The combined objective is
\begin{equation}
    \label{eq:combined_objective}
    \begin{aligned}
        \mathcal{L}_\theta[k, \ytarg_{1:N}]
        &=
        \E_{\mbx_{1:M} \overset{i.i.d.}{\sim} p_{\theta}}
        \left[
            \widehat{\operatorname{MMD}}^2_k[\mby_{1:M}, \ytarg_{1:N}]
        \right]
        + \lambda \operatorname{D}_{\mathrm{KL}}(p_\theta \,\|\, \ppre).
    \end{aligned}
\end{equation}
KL regularization is a standard approach in reward finetuning \citep[]{fan2023dpok, uehara2024fine} and is used by \CGM.
As the KL estimator decomposes into independent per-sample terms, we apply the standard leave-one-out centering baseline to the KL coefficients, denoted \texttt{KL-LOO} in \Cref{alg:kcgm}.
We find varying $\lambda$ enables exploring the tradeoff between matching the target feature distribution and preserving the pretrained model's distribution.

The final algorithm is shown in \Cref{alg:kcgm}.
It consists of an optimization loop, wherein \textbf{(1)} samples are taken from the current model weights without tracking gradients, \textbf{(2)} the MMD and KL objectives are decomposed into per-sample coefficients with leave-one-out baselines, and \textbf{(3)} the coefficients are used to update the model weights with score-function gradients.
While the MMD computation is $O(M^2 + MN)$, it is negligible compared to model gradient calculations.

\section{Experiments}
We evaluate how well \methodname matches the target feature distribution in three applications with diverse feature maps.
Across these cases, \methodname substantially increases agreement with the distribution of target features and outperforms \CGM and direct finetuning baselines.
Before describing each experiment, we outline a shared protocol with additional details in Appendix~\Cref{app:additional_details}.

\paragraph{Evaluation details shared across experiments.}
Because both \methodname and \CGM include a KL regularizer to the pretrained model weighted by $\lambda$, no single $\lambda$ value determines a method's performance.
Instead, each method gives a tradeoff curve between distance from the pretrained model (measured as KL-to-pretrained) and distribution-matching error.
In practice, $\lambda$ must be chosen by considering this tradeoff.

The distribution-matching metric differs between experiments and is specified in each subsection.
Error bars are reported as the standard error of the mean from three replicates with different random seeds.

\paragraph{Baselines.}
Our primary baseline is \CGM, shorthand for the \CGM-relax variant of \citet{smith2025calibrating}, the closest prior method in our setting.
In the protein experiment (\Cref{subsec:genie}), the 2D feature space makes the CDF-binning variant of \CGM tractable, and we use it as described by \citet{smith2025calibrating}.
In the small-molecule and DNA experiments (\Cref{subsec:g2pt,subsec:dna}), binning is impractical in the higher-dimensional feature spaces, so we compare against using \CGM to target the mean of the feature distribution.
The small-molecule experiment additionally includes direct finetuning on the target set as a baseline.

\paragraph{Compute.} All experiments run on a single L40S GPU and range from one to 30 hours in duration per finetuning run.
Further details are provided in Appendix~\ref{app:additional_details}.


\subsection{Matching antibiotic feature distributions}\label{subsec:g2pt}
We first apply \methodname to calibrate a generative model of small molecules to match properties of a set of 174 small-molecule antibiotics.
\methodname delivers stronger agreement with chosen features than \CGM while maintaining sample quality.
By contrast, direct finetuning degrades model quality and leads to a high proportion of invalid molecules.

\paragraph{Background.} Small-molecule generative models are increasingly important in early-stage drug discovery \citep{du2024machine}, where they can help explore the vast space of synthesizable, drug-like molecules.
While the space of drug-like molecules may contain $10^{60}$ molecules \citep{polishchuk2013estimation}, molecule sets with a desired function are often small.

The number of known antibiotics, and especially the number of unique antibiotic scaffolds, is limited \citep{FARHA2025102562}. This makes antibiotics a natural target for \methodname.
Rather than finetuning a generator to reproduce the target set molecule-by-molecule, we ask whether it can be adapted to match selected \emph{aspects} of the antibiotic distribution.

\paragraph{G2PT model.}
We use G2PT, an autoregressive graph generative model in which molecular graphs are flattened into sequences of node and edge tokens \citep{chen_g2pt}.
Specifically, we use the one-million-parameter G2PT-small model pretrained on GuacaMol \citep{brown2019guacamol}, a benchmark of 1.3M drug-like molecules filtered from ChEMBL.
Thus, pretraining provides a broad prior over drug-like chemistry, while the antibiotic set represents a smaller and specialized target distribution.

\paragraph{Antibiotic features and kernels.}
We target a set of 174 small-molecule antibiotics curated by \citet{scalia2025deep}.
An advantage of \methodname is that the user can choose which aspects of the target distribution to match.
We consider three complementary feature spaces, capturing local chemical substructures, broader scaffold-level organization, and coarse physicochemical properties.

For local substructure, we use Morgan fingerprints \citep{rogers2010extended}, sparse binary vectors indicating the presence of hashed local subgraphs, paired with the Tanimoto kernel.

For scaffold-level organization, we use generic Murcko scaffolds, which ignore atom type and bond order to capture only the topology of the molecular scaffold \citep{bemis1996properties}.
This captures a different notion of similarity from Morgan fingerprints, as two molecules can share scaffolds while differing in local substructure, or share local substructures while differing in scaffold organization.
We embed generic scaffolds using the representation-learning strategy of Fr\'echet ChemNet Distance, taking the penultimate-layer activations of a molecular property predictor \citep{preuer2018frechet}.
Appendix~\Cref{fig:murcko_match} shows that scaffold embedding space separates molecules with different scaffolds, whereas Morgan fingerprints do so very weakly.

For coarse properties, we use a vector of eleven standard molecular descriptors (full list in Appendix~\Cref{subsec:abx_app}) summarizing physicochemical, structural, and drug-likeness properties.
We use the energy-distance kernel for both scaffold embeddings and descriptors.

\begin{figure}[h]
  \centering
  \includegraphics[width=\textwidth]{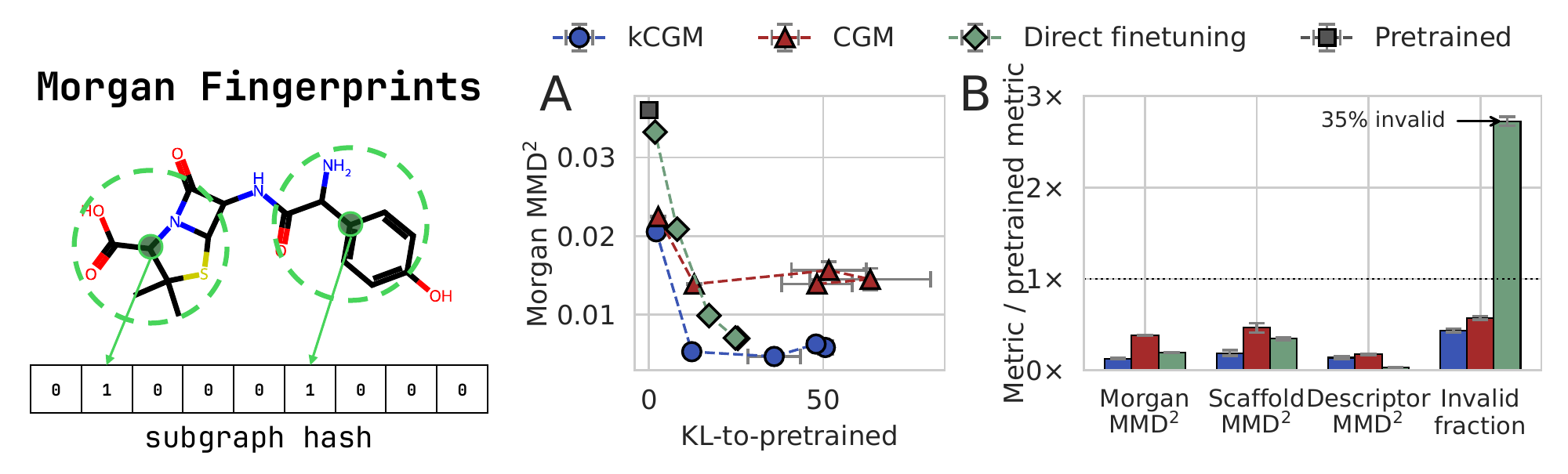}
  \caption{
Comparison of direct finetuning, \methodname, and \CGM for adapting G2PT to the antibiotics set. Panel A evaluates \methodname and \CGM models trained to match the target Morgan fingerprint distribution compared to direct finetuning according to MMD$^2$ over five values of the regularization strength, $\lambda$. Panel B selects the best $\lambda$ for each method according to Morgan $\operatorname{MMD}^2$. It then evaluates the selected models using scaffold $\operatorname{MMD}^2$, descriptor $\operatorname{MMD}^2$, and the fraction of invalid generated molecules.
  }
  \vspace{-1.0em}
  \label{fig:abx_results}
\end{figure}

\paragraph{Distribution matching results.}
We first focus on Morgan fingerprints, a common representation of local chemical substructure.
This setting is a natural test for \methodname because Morgan fingerprints are discrete bit vectors, making the feature map non-differentiable, and because the Tanimoto kernel provides a chemically meaningful similarity measure on the resulting feature space.

As baselines, we compare against \CGM, which matches only the target Morgan fingerprint mean, and direct finetuning on the 174 antibiotics.
Direct finetuning uses the standard autoregressive log-likelihood objective on full target molecules in the original sample space, with the same KL-to-pretrained regularization used for \methodname and \CGM; details are given in Appendix~\Cref{subsec:abx_app}.
For each method, we sweep five values of $\lambda$ and evaluate the tradeoff between Morgan fingerprint distribution matching, measured only on chemically valid generations, and distance from the pretrained model.

\methodname achieves a better Morgan $\operatorname{MMD}^2$--KL tradeoff than both \CGM and direct finetuning (\Cref{fig:abx_results}A).
This shows that matching only the average fingerprint, as in \CGM, is insufficient for matching the antibiotic fingerprint distribution.
Direct finetuning can also reduce Morgan $\operatorname{MMD}^2$, but does so by moving farther from the pretrained model and, as discussed below, at a cost to sample validity.

We next ask whether Morgan fingerprint finetuning improves other molecular feature distributions.
For each method, we select the best $\lambda$ according to Morgan $\operatorname{MMD}^2$ and evaluate the resulting models on scaffold, descriptor, and validity metrics (\Cref{fig:abx_results}B).
Although \methodname is trained only on Morgan fingerprints, it also improves scaffold and descriptor MMD relative to the pretrained model, suggesting that local substructure matching transfers partially to broader molecular properties.
In contrast, direct finetuning produces a high fraction of chemically invalid graphs.
This occurs because direct finetuning imitates the small antibiotic set molecule-by-molecule and does not explicitly discourage invalid generations, whereas invalid molecular graphs are mapped to a default feature value and are therefore penalized by the \CGM and \methodname objectives.
In addition, all \methodname runs maintain greater than 98\% uniqueness among generated molecules, indicating that the improved feature matching does not arise from memorization.

Finally, we repeat the same comparison when \methodname and \CGM are trained directly on scaffold embeddings or molecular descriptors.
These additional feature-targeted experiments are deferred to Appendix~\Cref{fig:abx_other_features} and show that \methodname remains effective across multiple user-chosen notions of molecular similarity.
A KL-matched summary of best interpolated metric values and cross-feature transfer is provided in Appendix~\Cref{tab:g2pt_full}.

\begin{wrapfigure}{r}{0.45\columnwidth}
\vspace{-1.0em}
\centering
\includegraphics[width=1\linewidth]{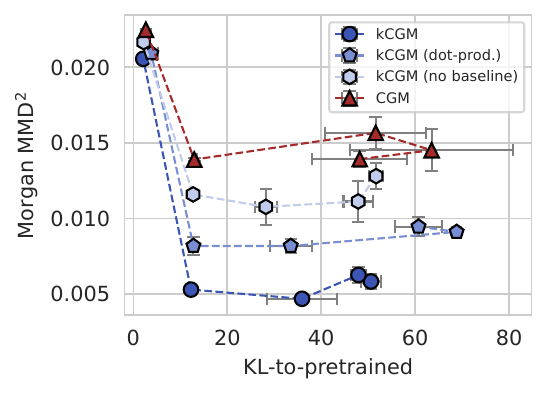}
    \caption{
Ablations of \methodname\ with Morgan fingerprint features. We compare standard \methodname, \methodname\ with the dot-product kernel, \methodname\ without the leave-one-out baseline, and \CGM.
    }
    \label{fig:g2pt_ablation}
\vspace{-2.5em}
\end{wrapfigure}

\paragraph{Ablation Results.}
We next ablate two components of \methodname\ on the Morgan fingerprint experiment.
First, we replace the Tanimoto kernel with the dot-product kernel, which reduces \methodname\ to mean matching rather than full distribution matching.
Second, we remove the leave-one-out baseline used for variance reduction.
Both changes degrade performance (\Cref{fig:g2pt_ablation}).
Using the dot-product kernel substantially increases MMD, confirming that distribution rather than mean matching is important.
Removing the leave-one-out baseline also substantially worsens \methodname, showing that the baseline is important for effective optimization.
\methodname\ with the dot-product kernel can be viewed as \CGM\ with our leave-one-out baseline, and substantially outperforms \CGM\ in this experiment.
This suggests the leave-one-out baseline is a drop-in improvement for \CGM.

\subsection{Increasing protein structure diversity}\label{subsec:genie}

We adopt an experiment from \citet{smith2025calibrating} in which Genie 2 \citep{lin2024out}, a protein structure diffusion model, is finetuned so that its secondary structure distribution matches a diverse target distribution of secondary structural proportions taken from protein domains in the CATH dataset \citep{sillitoe2021cath}.
While \citet{smith2025calibrating} demonstrated improved agreement with CATH, substantial disagreement remained.
With \methodname, the disagreement is roughly 2$\times$ smaller than that of \CGM.
In these experiments, we additionally explore a \emph{self-repulsion weight} hyperparameter that we find may be tuned to improve performance.

\begin{figure}[t]
  \begin{center}
    \centering
    \includegraphics[width=\textwidth]{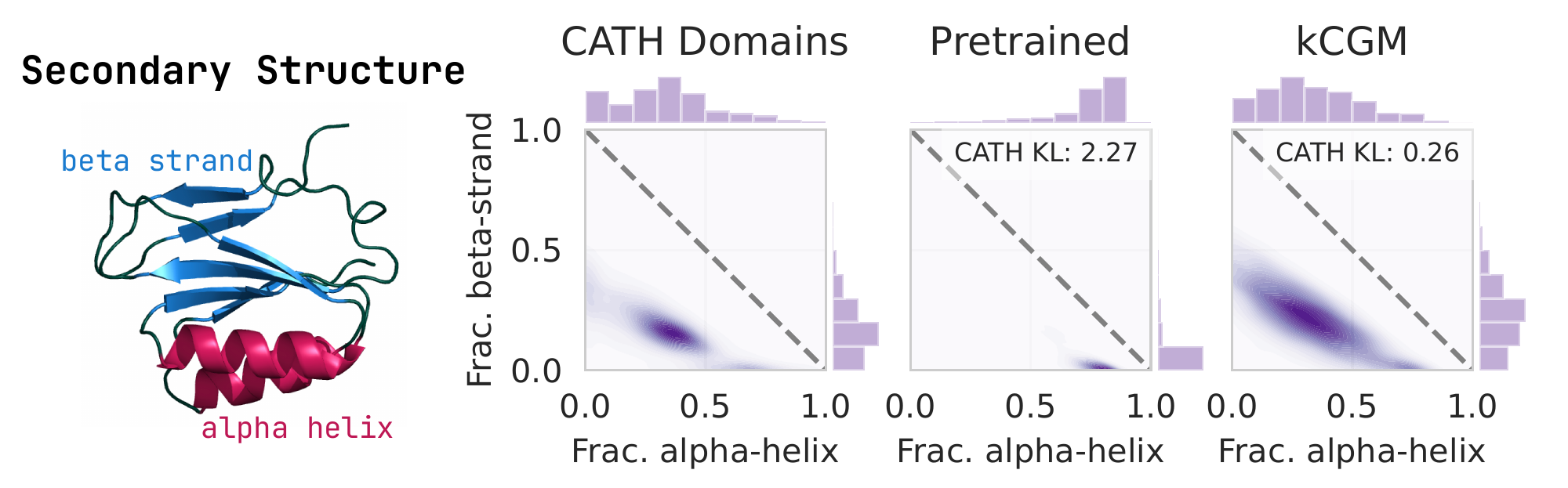}
    \caption{
Joint KDEs and marginal histograms of secondary-structure features for the target CATH domains, the pretrained Genie 2 model, and the best \methodname model. The pretrained model is concentrated near all-$\alpha$ structures, whereas \methodname shifts the distribution toward the diverse target distribution.}
    \label{fig:genie2_kdes}
  \end{center}
    \vskip -0.2in
\end{figure}

\paragraph{Background and setup.}
Due to a sampling heuristic necessary for high-quality generations, Genie 2 produces samples that consist almost entirely of alpha-helices.
In contrast, CATH protein domains exhibit high structural diversity and thus present a useful target distribution.

Following \citet{smith2025calibrating}, we generate protein domains with 100 residues and define the feature function as the proportion of the residues in alpha-helices and beta-strands:
\[
    \mbh(\mbx) = (\text{\# $\alpha$-helix residues}/100, \text{\# $\beta$-strand residues} /100).
\]
\methodname matches this 2D feature distribution directly, using the energy-distance kernel (\Cref{eq:energy_kernel}) which has no additional hyperparameters.
\CGM, by contrast, cannot use this 2D feature directly.
Its mean-matching objective would only constrain the average helix and strand proportions, not the shape of the joint distribution.
To recover distributional information, \citet{smith2025calibrating} discretize the CATH distribution into 99 bivariate CDF bins and constrain \CGM to match the proportion of samples falling in each bin.
In effect, \methodname matches the full 2D distribution through sample-based kernel comparisons, whereas \CGM matches 99 scalar bin-proportion means.
Because the feature space is only 2D, the CDF-binning strategy is tractable, making this a favorable setting for \CGM.
We reuse the \CGM hyperparameters (batch size, learning rate, etc.) without hyperoptimizing them for \methodname.

\begin{wrapfigure}[21]{r}{0.5\columnwidth}
\vspace{-0.5em}
\centering
\includegraphics[width=\linewidth]{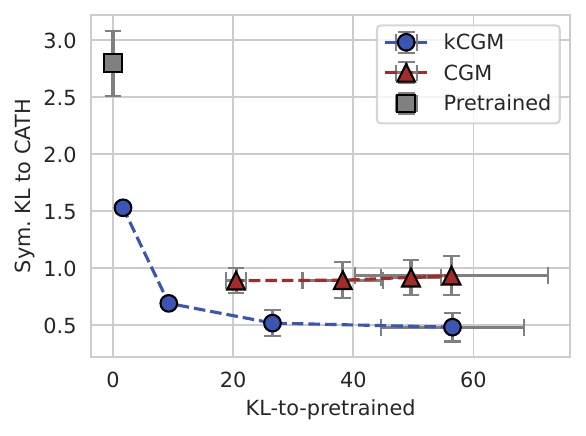}
\caption{
Distribution-matching error versus KL-to-pretrained for the protein secondary-structure experiment for four values of the KL-regularization weight $\lambda$. \methodname\ with $\alpha=0.5$ achieves lower symmetrized KL to the target CATH secondary-structure distribution than \CGM across the tested tradeoff range.
}
\label{fig:genie2_results}
\end{wrapfigure}

\paragraph{Self-repulsion hyperparameter.}
We additionally explore a \emph{self-repulsion} weight hyperparameter $\alpha$, which we include as a multiplicative factor on the first term in the MMD estimate in \Cref{eq:sample_MMD}, introducing a mode-seeking bias to the objective.
This results in a change to the MMD coefficients in \Cref{alg:kcgm},
$c_m^{\mathrm{k}} \leftarrow \frac{2}{M} \bigl(\alpha s_m - r_m\bigr)$.
We search three $\alpha$ values ($0.25, 0.5, 1.0$), though this is not necessary to outperform the baseline.
\Cref{app_subsec:cat_self_wt} provides greater detail, as well as examples to provide intuition for the impact of $\alpha.$

\paragraph{Results.}
\Cref{fig:genie2_results} shows the tradeoff curve between KL-to-pretrained and matching error, measured as the symmetric KL between Gaussian KDEs of the 2D features (\Cref{fig:genie2_kdes}).
\methodname achieves lower matching error than \CGM at every KL-to-pretrained budget we tested.
Moreover, \methodname's matching error varies smoothly with $\lambda$, whereas \CGM's matching error is relatively insensitive to $\lambda$, making it difficult for a practitioner to optimize the tradeoff.

While sweeping over $\alpha$ further improves distribution matching (Appendix~\Cref{fig:genie2_results_full}), every $\alpha$ value we test outperforms the baseline and maintains a controlled regularization curve.
This shows that \methodname is robust to both the regularization weight and the self-repulsion weight and reliably improves distribution matching compared to the baseline.

\subsection{Generating DNA with realistic cell-type-specific activity profiles}\label{subsec:dna}

Regulatory DNA sequence generation provides a challenging test of feature-distribution matching.
A useful enhancer generator should not only produce sequences that resemble the training set, but should also reproduce the distribution of regulatory activity associated with each biological condition \citep{lal2025polygraph}.
This is difficult to enforce with standard likelihood training and even large DNA generative models often fail to learn regulatory syntax \citep{patel2024dart}.

\paragraph{Background and setup.} We study this setting using the human melanoma enhancer DeepMEL2 dataset \citep{atak2021interpretation}.
The dataset consists of 500-base-pair regulatory sequences annotated with 47 binary cis-regulatory topics derived from melanoma ATAC-seq data.
These topics summarize melanoma cell states and provide multi-label conditioning variables.
We train a convolutional masked discrete diffusion model \citep{sahoo2024simple} to generate 500-base-pair DNA sequences conditional on these topic vectors.
For finetuning, we construct a separate target feature distribution for each topic using real training sequences positive for that topic.
Thus, when the model is sampled conditionally on a given topic, \methodname compares the generated AlphaGenome feature distribution to the target AlphaGenome feature distribution for that same topic.
Appendix~\Cref{app_subsec:dna} describes the model architecture, target construction, and training procedure.
Whereas sequences with high levels of predicted activity are rare in pretrained model samples, after \methodname finetuning DNA samples exhibit similar activity profiles to real enhancers with the corresponding topic annotations.

\begin{wrapfigure}[18]{r}{0.45\columnwidth}
\vspace{-1.3em}
\centering
\includegraphics[width=1\linewidth]{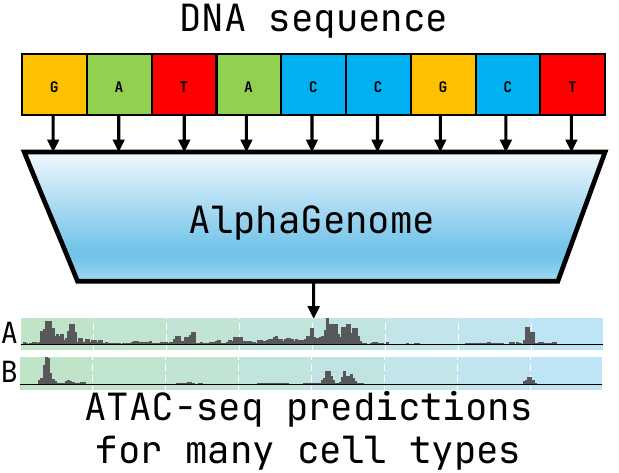}
    \caption{
Diagram of AlphaGenome activity features for \methodname finetuning. The maximum height of each ATAC-seq track is used to create a vector of maximum accessibility for each AlphaGenome cell type.
    }
    \label{fig:alpha_genome_feat}
\vspace{-2.5em}
\end{wrapfigure}

\paragraph{Regulatory activity features and kernel.}
To measure regulatory activity, we use AlphaGenome \citep{avsec2026alphagenome} as an independent sequence-to-function predictor.
For each generated or target sequence, we summarize AlphaGenome-predicted ATAC-seq profiles by their peak heights, yielding a vector of predicted cell-type-specific activity features as shown in \Cref{fig:alpha_genome_feat}.
These feature distributions are matched separately for each cis-regulatory topic.
We apply the preprocessing described in Appendix~\Cref{app_subsec:dna} and use the energy-distance kernel for both training and evaluation.

\begin{figure}[b] 
\centering 
\includegraphics[width=\textwidth]{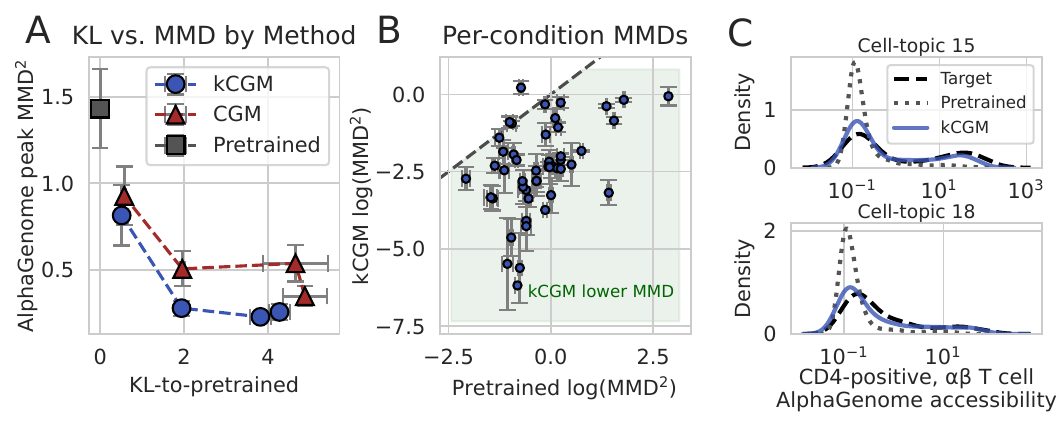} 
\vspace{-2em} 
\caption{ \methodname for conditional regulatory DNA generation. Panel A shows the tradeoff curve between KL to the pretrained model vs. matching the target AlphaGenome-predicted cell-type activity profiles for four values of the KL-regularization weight $\lambda$. Panel B shows MMDs for \methodname with $\lambda = 10^{-2}$ vs. the pretrained model for each of the 47 cell-type conditions. Panel C shows example AlphaGenome peak distributions for CD4-positive $\alpha\beta$ T cells before and after \methodname. } 
\vspace{-1.0em}
\label{fig:dna_results}
\end{figure}

\paragraph{Results.}
We sweep four regularization strengths $\lambda$ for \methodname\ and \CGM, using only training-set AlphaGenome features during finetuning to avoid leakage.
Each finetuning update samples a topic condition and compares generated sequences for that condition to the corresponding topic-specific target feature distribution.
In this experiment, \CGM targets only the mean activity profile for each condition, whereas \methodname\ matches the full activity-feature distribution.
\methodname\ achieves a better KL--MMD tradeoff than \CGM, as measured against held-out target feature distributions for the same topics (\Cref{fig:dna_results}A).
At the per-condition level, \methodname\ improves activity-profile matching for most conditioning topics (\Cref{fig:dna_results}B).
Representative AlphaGenome output tracks show that the pretrained model generates under-active sequences, while \methodname\ shifts the predicted activity distributions toward the topic-specific target (\Cref{fig:dna_results}C).
This shows that \methodname can calibrate a conditional discrete diffusion model using a black-box sequence-to-function predictor.

\section{Discussion and conclusion}
\methodname effectively aligns diffusion and autoregressive models to sample-defined feature distributions.
\methodname works without requiring differentiable features or target samples in the original sample space and maintains or even improves the sample quality of the pretrained model.
Across experiments, the same objective supports hand-designed, learned-embedding, and predictor-derived  features, suggesting that the feature map can be chosen to match the scientific question.
Our results indicate an alternative to direct finetuning when the target dataset is small and only certain aspects of the target distribution are important to preserve.

We provide some practical guidance for picking \methodname hyperparameters in Appendix~\Cref{app:usage_recommendations}.
Our empirical recommendations should be interpreted alongside some technical limitations.
For instance, MMD may be ineffective for very high-dimensional features.
This can be seen in a hypothesis-testing context, where the two-sample statistic we rely on for our objective (\Cref{eq:sample_MMD}) has power that drops as a polynomial function of the feature dimension \citep{ramdas2015decreasing}.
In practice, however, kernel and feature choice, use of PCA, and pretrained-model support may matter more than ambient dimension.
Additionally, \methodname does not immediately apply to generative models without tractable sampling-trajectory log-probabilities, such as flow matching models.
Both limitations may be directions for future work.

In our experiments, the target features are constructed by applying user-chosen feature functions to full target samples.
A natural next step is to use the same framework in settings where experiments provide distributional constraints without full samples in the original generative space.
Such constraints arise in many scientific measurements, including cryo-EM for protein structure \citep{shoemaker2018x}, correlated chemical probing for RNA structure \citep{mustoe2023single}, and proteomics-informed calibration of immunofluorescence image generators.
This would extend feature-distribution matching into a general interface between generative models and partial experimental measurements.

\begin{ack}  
We thank Henry Smith for helpful discussions, help setting up Genie 2 experiments, and providing manuscript feedback; Anshul Kundaje for inspiration for the enhancer DNA generation experiment; Arthur Deng for manuscript feedback; and Shiye Su for discussions about applications of \methodname.

NLD is supported by the Stanford Graduate Fellowship (SGF).
\end{ack}

\bibliographystyle{plainnat}
\bibliography{main}

\newpage
\appendix

\makeatletter
\renewcommand\part{\@ifstar\@spart\@appendixpart}
\newcommand\@appendixpart[1]{%
  \refstepcounter{part}%
  \addcontentsline{toc}{part}{#1}%
  {\centering\Huge\bfseries #1\par}%
  \vspace{1em}%
}
\makeatother

\part{Appendix}
\etocsettocstyle{\section*{Appendix contents}}{}
\localtableofcontents
\newpage

\section{Derivations}\label{app_sec:kCGM_derivations}

\subsection{\CGM-relax as a special case of \methodname}\label{app_subsec:kCGM_CGM}

We show the connection between \methodname and \CGM-relax using the dot-product kernel.
With this kernel, \methodname does not compare full feature distributions and reduces exactly to matching mean feature vectors.
Thus, \CGM-relax can be viewed as the dot-product kernel special case of \methodname, while other kernels extend the objective to capture richer aspects of the target feature distribution.

We first state the population-level relationship.

\begin{proposition}
Let $Q$ be a target distribution on feature space, and let $\mby = \mbh(\mbx)$ for $\mbx \sim p_\theta$, with distribution $P_{\mby}$.
If $k(\mbu,\mbv) = \mbu^\top \mbv$ is the dot-product kernel, then
\[
    \operatorname{MMD}_k^2(P_{\mby}, Q)
    =
    \left\|
        \E[\mby] - \E_{\ytarg \sim Q}[\ytarg]
    \right\|_2^2.
\]
Equivalently, \methodname with the dot-product kernel matches only the mean feature vector of the target distribution.
\end{proposition}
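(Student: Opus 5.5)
The plan is to substitute the dot-product kernel into the population definition \Cref{eqn:pop_MMD} and use bilinearity together with independence to pull expectations inside the inner products. Let $\mby,\mby'$ be independent draws from $P_{\mby}$ and $\ytarg,\ytarg{}'$ independent draws from $Q$, and write $\mu_P=\E[\mby]$ and $\mu_Q=\E_{\ytarg\sim Q}[\ytarg]$. We will assume both means are finite (first moments exist); otherwise the kernel expectations are not defined.

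The computation proceeds term by term. By independence and linearity of the inner product, $\E[\mby^\top\mby'] = \E[\mby]^\top\E[\mby'] = \mu_P^\top\mu_P$. The same argument gives $\E[\mby^\top\ytarg] = \mu_P^\top\mu_Q$ and $\E[\ytarg{}^\top\ytarg{}'] = \mu_Q^\top\mu_Q$.

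The key point is that \Cref{eqn:pop_MMD} pairs each variable with an \emph{independent} copy, not with itself. The same-distribution terms therefore factor, rather than giving $\E\|\mby\|^2$. This is exactly why all variance and higher-order information disappears.

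Substituting these into \Cref{eqn:pop_MMD} yields
\[
\mu_P^\top\mu_P - 2\mu_P^\top\mu_Q + \mu_Q^\top\mu_Q = \|\mu_P-\mu_Q\|_2^2,
\]
which is the claimed identity.

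The equivalence statement follows immediately. The right-hand side depends on $P_{\mby}$ only through $\mu_P$, and it vanishes if and only if $\mu_P=\mu_Q$. Hence minimizing the MMD term of \methodname with this kernel constrains only the mean feature vector.

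There is no real obstacle here. The only step that needs care is justifying the factorization of the cross-moment via independence. That requires integrability, which we obtain from the assumption of finite means, since independence gives $\E|\mby^\top\mby'|\le \sum_j \E|y_j|\,\E|y'_j|<\infty$.
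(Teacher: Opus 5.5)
Your proposal is correct and follows the paper's proof essentially line for line. You substitute the dot-product kernel into the population MMD, factor each cross term using independence of the paired copies, and recognize the result as $\|\E[\mby]-\E[\ytarg]\|_2^2$; your integrability remark, that finite first moments justify the factorization, is a small extra care the paper leaves implicit.
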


\begin{proof}
By the population definition of squared MMD,
\[
    \operatorname{MMD}_k^2(P_{\mby}, Q)
    =
    \E[\mby^\top \mby']
    - 2\E[\mby^\top \ytarg]
    + \E[(\ytarg)^\top (\ytarg)'],
\]
where $\mby,\mby' \overset{\mathrm{i.i.d.}}{\sim} P_{\mby}$ and $\ytarg,(\ytarg)' \overset{\mathrm{i.i.d.}}{\sim} Q$ are independent.
Using independence,
\[
    \E[\mby^\top \mby']
    =
    \E[\mby]^\top \E[\mby],
    \quad
    \E[\mby^\top \ytarg]
    =
    \E[\mby]^\top \E[\ytarg],
    \quad
    \E[(\ytarg)^\top (\ytarg)']
    =
    \E[\ytarg]^\top \E[\ytarg].
\]
Substituting gives
\[
    \operatorname{MMD}_k^2(P_{\mby}, Q)
    =
    \E[\mby]^\top \E[\mby]
    - 2\E[\mby]^\top \E[\ytarg]
    + \E[\ytarg]^\top \E[\ytarg]
    =
    \left\|
        \E[\mby] - \E[\ytarg]
    \right\|_2^2.
\]
\end{proof}

As an immediate corollary, if the target distribution $Q$ is concentrated at a single value $\mbh^\ast$, then dot-product \methodname reduces to matching $\E[\mbh(\mbx)]$ to $\mbh^\ast$, which is exactly the population objective used by \CGM-relax:
\[
    \mathcal{L}_\theta[\mbh^\ast]
    =
    \left\|
        \E_{\mbx \sim p_{\theta}}[\mbh(\mbx)] - \mbh^\ast
    \right\|_2^2
    +
    \lambda \operatorname{D_{KL}}(p_\theta || \ppre).
\]

At the stochastic-optimization level, the connection is even tighter.
Let $\hat{\mathcal L}^{\mathrm{viol}}$ denote the finite-sample \CGM-relax violation objective used to match the empirical target mean.
When the target feature distribution is represented by empirical samples, dot-product \methodname depends on those samples only through their empirical mean, up to a target-only constant.
As a result, dot-product \methodname recovers the same score-function optimization objective as \CGM-relax.

\begin{proposition}
Let $\ytarg_{1:N}$ be target feature samples with empirical mean
\[
    \bar{\mby}^{\mathrm{targ}}
    :=
    \frac{1}{N}\sum_{n=1}^N \ytarg_n.
\]
Let $\mby_{1:M}$ be model feature samples, where $\mby_m = \mbh(\mbx_m)$ for $\mbx_m \sim p_\theta$, and take the dot-product kernel
\[
    k(\mbu,\mbv) = \mbu^\top \mbv.
\]
Then the unbiased finite-sample estimator of squared MMD satisfies
\[
    \widehat{\operatorname{MMD}}_k^2[\mby_{1:M}, \ytarg_{1:N}]
    =
    \frac{1}{M(M-1)}\sum_{m \neq m'} \mby_m^\top \mby_{m'}
    -
    \frac{2}{M}\sum_{m=1}^M \mby_m^\top \bar{\mby}^{\mathrm{targ}}
    + C_{\mathrm{targ}},
\]
where
\[
    C_{\mathrm{targ}}
    :=
    \frac{1}{N(N-1)}\sum_{n \neq n'} (\ytarg_n)^\top \ytarg_{n'}
\]
depends only on the target samples.

Equivalently,
\[
    \widehat{\operatorname{MMD}}_k^2[\mby_{1:M}, \ytarg_{1:N}]
    =
    \hat{\mathcal L}^{\mathrm{viol}}(\mby_{1:M};\bar{\mby}^{\mathrm{targ}})
    + C'_{\mathrm{targ}},
\]
where
\[
    \hat{\mathcal L}^{\mathrm{viol}}(\mby_{1:M};\bar{\mby}^{\mathrm{targ}})
    :=
    \frac{1}{M(M-1)}\sum_{m \neq m'}
    (\mby_m-\bar{\mby}^{\mathrm{targ}})^\top
    (\mby_{m'}-\bar{\mby}^{\mathrm{targ}}),
\]
and $C'_{\mathrm{targ}}$ is constant with respect to $\theta$.
\end{proposition}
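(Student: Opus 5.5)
The plan is a direct algebraic verification in two stages, starting from the definition of the unbiased estimator in \Cref{eq:sample_MMD} with $k(\mbu,\mbv)=\mbu^\top\mbv$.

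\textbf{Stage one: the first displayed identity.} The within-model term $\frac{1}{M(M-1)}\sum_{m\neq m'}\mby_m^\top\mby_{m'}$ and the target-only term $C_{\mathrm{targ}}$ are already in the stated form, so nothing needs to be done to them. For the cross term I will use bilinearity of the dot product:
\[
\frac{2}{MN}\sum_{m=1}^M\sum_{n=1}^N \mby_m^\top\ytarg_n=\frac{2}{M}\sum_{m=1}^M \mby_m^\top\Bigl(\frac1N\sum_{n=1}^N\ytarg_n\Bigr)=\frac{2}{M}\sum_{m=1}^M\mby_m^\top\bar{\mby}^{\mathrm{targ}} .
\]
This gives the first identity.

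\textbf{Stage two: the violation form.} I will expand $\hat{\mathcal L}^{\mathrm{viol}}$ term by term. Write $\bar{\mby}=\bar{\mby}^{\mathrm{targ}}$. Each summand equals
\[
\mby_m^\top\mby_{m'}-\mby_m^\top\bar{\mby}-\bar{\mby}^\top\mby_{m'}+\|\bar{\mby}\|_2^2 .
\]
Summing over ordered pairs $m\neq m'$ requires the only real bookkeeping in the argument. Each fixed $m$ appears as the first index in exactly $M-1$ ordered pairs, and likewise as the second index. Hence
\[
\sum_{m\neq m'}\bigl(\mby_m^\top\bar{\mby}+\bar{\mby}^\top\mby_{m'}\bigr)=2(M-1)\sum_{m}\mby_m^\top\bar{\mby} .
\]
After dividing by $M(M-1)$, this becomes $\frac{2}{M}\sum_m\mby_m^\top\bar{\mby}$, which matches the cross term from stage one. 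The constant summand contributes $M(M-1)\|\bar{\mby}\|_2^2$ in total, which normalizes to $\|\bar{\mby}\|_2^2$. Therefore
\[
\hat{\mathcal L}^{\mathrm{viol}}=\frac{1}{M(M-1)}\sum_{m\neq m'}\mby_m^\top\mby_{m'}-\frac{2}{M}\sum_m\mby_m^\top\bar{\mby}+\|\bar{\mby}\|_2^2 .
\]

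Comparing this with stage one gives the second identity with
\[
C'_{\mathrm{targ}}=C_{\mathrm{targ}}-\|\bar{\mby}^{\mathrm{targ}}\|_2^2 .
\]
This quantity depends only on the fixed target samples, so it is constant in $\theta$.

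\textbf{Main obstacle.} There is no conceptual difficulty here. The one place to be careful is the pair-counting factor $M-1$ in stage two, which makes the cross terms cancel exactly. I will also note that the ordered-pair convention is the same in \Cref{eq:sample_MMD} and in $\hat{\mathcal L}^{\mathrm{viol}}$, since the normalizations only agree under that shared convention.
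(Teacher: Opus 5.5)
Your proposal is correct and follows the paper's proof essentially step for step: bilinearity collapses the cross term onto $\bar{\mby}^{\mathrm{targ}}$, and expanding $\hat{\mathcal L}^{\mathrm{viol}}$ gives the same model--model and cross terms plus $\|\bar{\mby}^{\mathrm{targ}}\|_2^2$, so $C'_{\mathrm{targ}} = C_{\mathrm{targ}} - \|\bar{\mby}^{\mathrm{targ}}\|_2^2$. Your explicit $(M-1)$ pair count only spells out the expansion the paper states without detail.
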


\begin{proof}
For the dot-product kernel,
\[
\begin{aligned}
    \widehat{\operatorname{MMD}}_k^2[\mby_{1:M}, \ytarg_{1:N}]
    =
    &\frac{1}{M(M-1)}
    \sum_{m \neq m'} \mby_m^\top \mby_{m'}
    -
    \frac{2}{MN}
    \sum_{m=1}^M \sum_{n=1}^N
    \mby_m^\top \ytarg_n \\
    &+
    \frac{1}{N(N-1)}
    \sum_{n \neq n'} (\ytarg_n)^\top \ytarg_{n'}.
\end{aligned}
\]
Rearranging the middle term gives
\[
    \frac{2}{MN}
    \sum_{m=1}^M \sum_{n=1}^N
    \mby_m^\top \ytarg_n
    =
    \frac{2}{M}
    \sum_{m=1}^M
    \mby_m^\top \bar{\mby}^{\mathrm{targ}},
\]
so
\[
    \widehat{\operatorname{MMD}}_k^2[\mby_{1:M}, \ytarg_{1:N}]
    =
    \frac{1}{M(M-1)}
    \sum_{m \neq m'} \mby_m^\top \mby_{m'}
    -
    \frac{2}{M}
    \sum_{m=1}^M
    \mby_m^\top \bar{\mby}^{\mathrm{targ}}
    +
    C_{\mathrm{targ}}.
\]
On the other hand,
\[
    \hat{\mathcal L}^{\mathrm{viol}}(\mby_{1:M};\bar{\mby}^{\mathrm{targ}})
    =
    \frac{1}{M(M-1)}\sum_{m \neq m'}
    (\mby_m-\bar{\mby}^{\mathrm{targ}})^\top
    (\mby_{m'}-\bar{\mby}^{\mathrm{targ}}),
\]
which expands to
\[
    \frac{1}{M(M-1)}
    \sum_{m \neq m'} \mby_m^\top \mby_{m'}
    -
    \frac{2}{M}
    \sum_{m=1}^M
    \mby_m^\top \bar{\mby}^{\mathrm{targ}}
    +
    \|\bar{\mby}^{\mathrm{targ}}\|_2^2.
\]
Therefore
\[
    \widehat{\operatorname{MMD}}_k^2[\mby_{1:M}, \ytarg_{1:N}]
    =
    \hat{\mathcal L}^{\mathrm{viol}}(\mby_{1:M};\bar{\mby}^{\mathrm{targ}})
    +
    \left(
        C_{\mathrm{targ}}
        -
        \|\bar{\mby}^{\mathrm{targ}}\|_2^2
    \right),
\]
and the difference depends only on the target samples.
\end{proof}

As a consequence, the expected dot-product \methodname objective differs from the \CGM-relax mean-matching objective with target mean $\bar{\mby}^{\mathrm{targ}}$ only by a constant independent of $\theta$.
Therefore the score-function gradient of dot-product \methodname is identical to that of \CGM-relax with target mean $\bar{\mby}^{\mathrm{targ}}$.

\subsection{Kernel discrepancy coefficients for score-function gradients}
\label{app_subsec:kernel_coeffs}

As a reminder, $\mbx \sim p_\theta$ denotes a full sampling trajectory.
The feature map $\mbh(\mbx)$ is understood to act on the terminal generated sample of that trajectory.
Thus $\logp_\theta(\mbx)$ denotes the sampling-trajectory log-probability.

We now derive the per-sample kernel discrepancy coefficients used in Algorithm~\ref{alg:kcgm}.
Since \methodname\ is designed to handle non-differentiable feature functions $\mbh(\cdot)$ and non-differentiable sampling procedures, we derive these coefficients directly from the score-function gradient of the expected squared MMD objective
\begin{equation}
    \label{eq:kernel_objective_expectation}
    \mathcal{L}^{\mathrm{k}}_\theta[\ytarg_{1:N}]
    :=
    \E_{\mbx_{1:M} \overset{\mathrm{i.i.d.}}{\sim} p_\theta}
    \left[
        \widehat{\operatorname{MMD}}^2_k[\mby_{1:M}, \ytarg_{1:N}]
    \right],
    \qquad
    \mby_m := \mbh(\mbx_m).
\end{equation}
The target--target term in \Cref{eq:sample_MMD} is constant with respect to $\theta$, so it can be ignored when taking gradients. Thus
\begin{align}
    \nabla_\theta \mathcal{L}^{\mathrm{k}}_\theta[\ytarg_{1:N}]
    &=
    \nabla_\theta
    \E_{\mbx_{1:M} \overset{\mathrm{i.i.d.}}{\sim} p_\theta}
    \underbrace{\left[
        \frac{1}{M(M-1)}
        \sum_{m\neq m'}^M
        k(\mby_m, \mby_{m'})
        -
        \frac{2}{MN}
        \sum_{m=1}^M\sum_{n=1}^N
        k(\mby_m, \ytarg_n)
    \right]}_{[\text{kernel terms}]}
    \nonumber\\
    &=
    \E_{\mbx_{1:M} \overset{\mathrm{i.i.d.}}{\sim} p_\theta}
    \Biggl[
        [\text{kernel terms}]
        \sum_{m=1}^M \nabla_\theta \logp_\theta(\mbx_m)
    \Biggr].
    \label{eq:score_fn_joint}
\end{align}

We first derive the contribution of the model--model term. Define
\[
    \phi(\mbx,\mbx') := k(\mbh(\mbx), \mbh(\mbx')).
\]
Then
\begin{align}
    &\nabla_\theta
    \E_{\mbx,\mbx' \overset{\mathrm{i.i.d.}}{\sim} p_\theta}
    \bigl[\phi(\mbx,\mbx')\bigr]
    \nonumber\\
    &=
    \nabla_\theta
    \int \phi(\mbx,\mbx')\, p_\theta(\mbx)\, p_\theta(\mbx') \, d\mbx\, d\mbx'
    \nonumber\\
    &=
    \int \phi(\mbx,\mbx')
    \Bigl(
        \nabla_\theta p_\theta(\mbx)\, p_\theta(\mbx')
        +
        p_\theta(\mbx)\, \nabla_\theta p_\theta(\mbx')
    \Bigr)
    d\mbx\, d\mbx'
    \nonumber\\
    &=
    \E_{\mbx,\mbx' \overset{\mathrm{i.i.d.}}{\sim} p_\theta}
    \left[
        \phi(\mbx,\mbx')
        \bigl(
            \nabla_\theta \logp_\theta(\mbx)
            +
            \nabla_\theta \logp_\theta(\mbx')
        \bigr)
    \right].
    \label{eq:pairwise_product_rule}
\end{align}
Since $\phi(\mbx,\mbx') = \phi(\mbx',\mbx)$ by symmetry of the kernel and $(\mbx,\mbx')$ are exchangeable, the two expectation terms are equal, so
\begin{align}
    \nabla_\theta
    \E_{\mbx,\mbx' \overset{\mathrm{i.i.d.}}{\sim} p_\theta}
    \bigl[k(\mbh(\mbx), \mbh(\mbx'))\bigr]
    =
    2\,
    \E_{\mbx,\mbx' \overset{\mathrm{i.i.d.}}{\sim} p_\theta}
    \left[
        k(\mbh(\mbx), \mbh(\mbx'))
        \nabla_\theta \logp_\theta(\mbx)
    \right].
    \label{eq:pairwise_factor_two}
\end{align}
Applying \Cref{eq:pairwise_factor_two} to the model--model term in \Cref{eq:score_fn_joint}, and the ordinary score-function trick to the model--target term, gives
\begin{equation}
\resizebox{\textwidth}{!}{$\displaystyle
\begin{aligned}
    \nabla_\theta \mathcal{L}^{\mathrm{k}}_\theta[\ytarg_{1:N}]
    =
    \E_{\mbx_{1:M} \overset{\mathrm{i.i.d.}}{\sim} p_\theta}
    \Biggl[
        \sum_{m=1}^M
        \Biggl(
            \frac{2}{M(M-1)}
            \sum_{m' \neq m} k(\mby_m, \mby_{m'})
            -
            \frac{2}{MN}
            \sum_{n=1}^N k(\mby_m, \ytarg_n)
        \Biggr)
        \nabla_\theta \logp_\theta(\mbx_m)
    \Biggr].
\end{aligned}
$}
\end{equation}

Define
\begin{align}
    s_m &:= \frac{1}{M-1}\sum_{m' \neq m} k(\mby_m, \mby_{m'}), \\
    r_m &:= \frac{1}{N}\sum_{n=1}^N k(\mby_m, \ytarg_n), \\
    g_m &:= 2\left(s_m - r_m\right), \\
    c_m^{\mathrm{k}} &:= \frac{1}{M} g_m.
\end{align}
Then
\begin{equation}
    \label{eq:kernel_grad_ck}
    \nabla_\theta \mathcal{L}^{\mathrm{k}}_\theta[\ytarg_{1:N}]
    =
    \E_{\mbx_{1:M} \overset{\mathrm{i.i.d.}}{\sim} p_\theta}
    \left[
        \sum_{m=1}^M c_m^{\mathrm{k}} \nabla_\theta \logp_\theta(\mbx_m)
    \right].
\end{equation}
Thus $c_m^{\mathrm{k}}$ are exactly the kernel discrepancy coefficients used in Algorithm~\ref{alg:kcgm}.

\subsection{Unbiased surrogate gradients}
\label{app_subsec:unbiased_grads}

Under standard regularity conditions, we may exchange gradient and expectation in the score-function expressions above.
We now show that the stop-gradient surrogate used in Algorithm~\ref{alg:kcgm} gives an unbiased estimator of the gradient of the regularized objective.

\begin{proposition}
The stop-gradient surrogate gradient used by \methodname\ is an unbiased estimator of
\[
    \nabla_\theta \mathcal{L}_\theta[\ytarg_{1:N}]
    =
    \nabla_\theta \mathcal{L}^{\mathrm{k}}_\theta[\ytarg_{1:N}]
    +
    \lambda \nabla_\theta \operatorname{D_{KL}}(p_\theta || \ppre).
\]
\end{proposition}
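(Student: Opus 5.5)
We need to show that the stop-gradient surrogate has expected gradient equal to $\nabla_\theta \mathcal{L}_\theta$. The surrogate is $\widehat{\mathcal L}^{\mathrm{surr}} = \sum_m c_m \logp_\theta(\mbx_m)$ with $c_m = \lambda c_m^{\mathrm{KL}} + c_m^{\mathrm{k,LOO}}$, where the coefficients and samples are held fixed. Its gradient is therefore $\sum_m c_m \nabla_\theta \logp_\theta(\mbx_m)$. By linearity it suffices to treat the MMD part and the KL part separately.

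\textbf{Step 1 (kernel part without baseline).} \Cref{eq:kernel_grad_ck} already gives
\[
\E\Bigl[\sum_m c_m^{\mathrm{k}} \nabla_\theta \logp_\theta(\mbx_m)\Bigr] = \nabla_\theta \mathcal{L}^{\mathrm{k}}_\theta .
\]
This identity is valid under the regularity conditions allowing the exchange of gradient and expectation, so I will invoke it directly.

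\textbf{Step 2 (baselines).} Write $c_m^{\mathrm{k,LOO}} = c_m^{\mathrm{k}} - b_m$, where $b_m$ is the \texttt{MMD-LOO} baseline. The key property to verify is that $b_m$ is a measurable function of $\mbx_{-m}$ only, that is, of the other batch samples, possibly through their pairwise kernel values among themselves and with the targets. Given this, the tower property and the independence of the samples give
\[
\E\bigl[b_m \nabla_\theta \logp_\theta(\mbx_m)\bigr] = \E\bigl[b_m\, \E_{\mbx_m}[\nabla_\theta \logp_\theta(\mbx_m)]\bigr] = 0,
\]
since $\E_{p_\theta}[\nabla_\theta \log p_\theta] = \nabla_\theta \int p_\theta = 0$.

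This is the main obstacle. The naive leave-one-out average of the $c_{m'}^{\mathrm{k}}$ for $m'\neq m$ still depends on $\mbx_m$, because $s_{m'}$ contains $k(\mby_{m'},\mby_m)$. The \texttt{MMD-LOO} expression of \Cref{app_subsec:loo} must therefore be checked to recompute the coefficients of the other samples on the reduced batch $\mbx_{-m}$, using $\frac{1}{M-2}\sum_{m''\neq m,m'} k(\mby_{m'},\mby_{m''})$. I would first confirm this structure, and then the zero-mean argument above applies verbatim.

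\textbf{Step 3 (KL part).} Differentiating $\operatorname{D_{KL}}(p_\theta\|\ppre) = \E_{p_\theta}[\logp_\theta - \logp_{\thetapre}]$ gives
\[
\E_{p_\theta}\bigl[(\logp_\theta(\mbx) - \logp_{\thetapre}(\mbx))\nabla_\theta \logp_\theta(\mbx)\bigr] + \E_{p_\theta}[\nabla_\theta \logp_\theta(\mbx)].
\]
The second term vanishes by the same score identity. The batch average $\frac1M\sum_m l_m \nabla_\theta \logp_\theta(\mbx_m)$, with $l_m$ evaluated at the stop-gradient parameters (which equal $\theta$ in value), is therefore unbiased. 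The \texttt{KL-LOO} baseline subtracts $\frac{1}{M-1}\sum_{m'\neq m} l_{m'}$, which depends only on $\mbx_{-m}$, so by Step 2's argument it contributes zero in expectation.

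\textbf{Conclusion.} Summing the three contributions, weighted by $1$ and $\lambda$, gives unbiasedness for $\nabla_\theta \mathcal{L}^{\mathrm{k}}_\theta + \lambda \nabla_\theta \operatorname{D_{KL}}(p_\theta\|\ppre)$.
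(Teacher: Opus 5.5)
Your proof is correct and follows the paper's argument. Both rely on the score-function form of $\nabla_\theta \mathcal{L}^{\mathrm{k}}_\theta$ in \Cref{eq:kernel_grad_ck}, the analogous score-function form of the KL gradient, and the fact that stop-gradient coefficients are constants, so the surrogate's gradient is $\sum_m c_m \nabla_\theta \logp_\theta(\mbx_m)$. The differences are minor: you derive the KL score form directly rather than citing \citet{smith2025calibrating}, and you fold the \texttt{MMD-LOO} and \texttt{KL-LOO} baselines into this proof, correctly anticipating the reduced-batch construction, whereas the paper proves the uncentered version here and handles the baselines with the same zero-mean argument in \Cref{app_subsec:loo}.
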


\begin{proof}
From \Cref{app_subsec:kernel_coeffs}, we have
\[
    \nabla_\theta \mathcal{L}^{\mathrm{k}}_\theta[\ytarg_{1:N}]
    =
    \E_{\mbx_{1:M} \overset{\mathrm{i.i.d.}}{\sim} p_\theta}
    \left[
        \sum_{m=1}^M c_m^{\mathrm{k}} \nabla_\theta \logp_\theta(\mbx_m)
    \right].
\]
For the KL term, \citet{smith2025calibrating} derive the analogous score-function form
\[
    \nabla_\theta \operatorname{D_{KL}}(p_\theta || \ppre)
    =
    \E_{\mbx_{1:M} \overset{\mathrm{i.i.d.}}{\sim} p_\theta}
    \left[
        \sum_{m=1}^M c_m^{\mathrm{KL}} \nabla_\theta \logp_\theta(\mbx_m)
    \right],
\]
where
\[
    l_m := \logp_\theta(\mbx_m) - \logp_{\thetapre}(\mbx_m),
    \qquad
    c_m^{\mathrm{KL}} := \frac{1}{M} l_m.
\]
Combining the two terms gives
\[
    \nabla_\theta \mathcal{L}_\theta[\ytarg_{1:N}]
    =
    \E_{\mbx_{1:M} \overset{\mathrm{i.i.d.}}{\sim} p_\theta}
    \left[
        \sum_{m=1}^M
        \bigl(\lambda c_m^{\mathrm{KL}} + c_m^{\mathrm{k}}\bigr)
        \nabla_\theta \logp_\theta(\mbx_m)
    \right].
\]
Now define
\[
    c_m := \lambda c_m^{\mathrm{KL}} + c_m^{\mathrm{k}},
    \qquad
    \widehat{\mathcal L}^{\mathrm{\methodname\text{-}surrogate}}
    :=
    \sum_{m=1}^M c_m \logp_\theta(\mbx_m),
\]
where the coefficients $c_m$ are computed from samples drawn from $p_{\texttt{stop-grad}(\theta)}$ and are therefore treated as constants with respect to the gradient step.
Hence
\[
    \nabla_\theta \widehat{\mathcal L}^{\mathrm{\methodname\text{-}surrogate}}
    =
    \sum_{m=1}^M c_m \nabla_\theta \logp_\theta(\mbx_m).
\]
Taking expectations over $\mbx_{1:M} \overset{\mathrm{i.i.d.}}{\sim} p_\theta$ yields
\[
    \E_{\mbx_{1:M} \overset{\mathrm{i.i.d.}}{\sim} p_\theta}
    \left[
        \nabla_\theta \widehat{\mathcal L}^{\mathrm{\methodname\text{-}surrogate}}
    \right]
    =
    \nabla_\theta \mathcal{L}_\theta[\ytarg_{1:N}].
\]
Thus the surrogate gradient is unbiased.
\end{proof}

\subsection{Leave-one-out baselines}\label{app_subsec:loo}

To reduce variance, Algorithm~\ref{alg:kcgm} applies leave-one-out centering to the kernel coefficients.
Intuitively, the baseline for sample $m$ should approximate the average coefficient that would have been obtained had $\mbx_m$ not appeared in the batch.
This does not change the expectation provided the baseline subtracted from the coefficient for sample $m$ is a function only of $\mbx_{-m}$, since for any $b_m(\mbx_{-m},\ytarg_{1:N})$ that does not depend on $\mbx_m$,
\[
    \E_{\mbx_{1:M} \sim p_\theta}
    \left[
        b_m(\mbx_{-m},\ytarg_{1:N})
        \nabla_\theta \logp_\theta(\mbx_m)
    \right]
    =
    0.
\]
Indeed,
\[
\begin{aligned}
    &\E_{\mbx_{1:M} \sim p_\theta}
    \left[
        b_m(\mbx_{-m},\ytarg_{1:N})
        \nabla_\theta \logp_\theta(\mbx_m)
    \right]
    \\
    &\qquad =
    \E_{\mbx_{-m} \sim p_\theta}
    \left[
        b_m(\mbx_{-m},\ytarg_{1:N})
        \E_{\mbx_m \sim p_\theta}
        \left[
            \nabla_\theta \logp_\theta(\mbx_m)
        \right]
    \right]
    =
    0.
\end{aligned}
\]

For the kernel term, we therefore define the leave-one-out baseline for sample $m$ by recomputing the coefficients of the remaining samples on the reduced batch $\mbx_{-m}$.
Concretely, for $j \neq m$, define
\[
    s_j^{(-m)}
    :=
    \frac{1}{M-2}\sum_{\ell \neq j,m} k(\mby_j,\mby_\ell),
    \qquad
    r_j^{(-m)} := r_j,
\]
and
\[
    g_j^{(-m)} := 2\bigl(s_j^{(-m)} - r_j\bigr),
    \qquad
    c_j^{\mathrm{k},(-m)} := \frac{1}{M} g_j^{(-m)}.
\]
We then set
\[
    b_m^{\mathrm{k}}(\mbx_{-m},\ytarg_{1:N})
    :=
    \frac{1}{M-1}\sum_{j \neq m} c_j^{\mathrm{k},(-m)},
\]
and use the centered coefficient
\[
    c_m^{\mathrm{k,LOO}}
    :=
    c_m^{\mathrm{k}} - b_m^{\mathrm{k}}(\mbx_{-m},\ytarg_{1:N}).
\]
By construction, $b_m^{\mathrm{k}}(\mbx_{-m},\ytarg_{1:N})$ depends only on $\mbx_{-m}$ and $\ytarg_{1:N}$, so replacing $c_m^{\mathrm{k}}$ with $c_m^{\mathrm{k,LOO}}$ preserves unbiasedness.

Naively computing $b_m^{\mathrm{k}}(\mbx_{-m},\ytarg_{1:N})$ for every $m$ would require recomputing the model--model coefficients separately for each leave-one-out batch.
However, all leave-one-out coefficients can be obtained efficiently from the same model feature kernel matrix $K_{\mby\mby}$, where $(K_{\mby\mby})_{ij} = k(\mby_i,\mby_j)$.
In particular, for $j \neq m$,
\begin{align}
    s_j^{(-m)}
    &:=
    \frac{1}{M-2}\sum_{\ell \neq j,m} k(\mby_j,\mby_\ell)
    \\
    &=
    \frac{\sum_{\ell \neq j} k(\mby_j,\mby_\ell) - k(\mby_j,\mby_m)}{M-2}
    \\
    &=
    \frac{(M-1)s_j - k(\mby_j,\mby_m)}{M-2},
    \label{eq:kernel_loo_self}
    \\
    g_j^{(-m)}
    &:=
    2\bigl(s_j^{(-m)} - r_j\bigr),
    \qquad
    c_j^{\mathrm{k},(-m)} := \frac{1}{M} g_j^{(-m)},
    \label{eq:kernel_loo_coeff}
    \\
    b_m^{\mathrm{k}}(\mbx_{-m},\ytarg_{1:N})
    &:=
    \frac{1}{M-1}\sum_{j \neq m} c_j^{\mathrm{k},(-m)},
    \qquad
    c_m^{\mathrm{k,LOO}}
    :=
    c_m^{\mathrm{k}} - b_m^{\mathrm{k}}(\mbx_{-m},\ytarg_{1:N}).
    \label{eq:kernel_loo_baseline}
\end{align}
Thus all $b_m^{\mathrm{k}}(\mbx_{-m},\ytarg_{1:N})$ can be computed in $O(M^2)$ time from the row sums of $K_{\mby\mby}$, without forming additional kernel matrices.
Algorithm~\ref{alg:kcgm} uses the centered coefficients $c_m^{\mathrm{k,LOO}}$ from \Cref{eq:kernel_loo_baseline}.

Algorithm~\ref{alg:kcgm} also applies the ordinary leave-one-out baseline to the KL coefficients, exactly as in \citet{smith2025calibrating}.
The final coefficients used by \methodname\ therefore remain unbiased while typically having lower variance.

\subsection{Behavior of self-repulsion weight less than one for categorical target features}\label{app_subsec:cat_self_wt}

We now show explicitly how the self-repulsion weight $\alpha < 1$ changes the target of the population objective in a simple categorical setting. Suppose the feature space is $\{1,\ldots,C\}$, the target feature distribution is
\[
    Q = (Q_1,\ldots,Q_C) \in \Delta_C,
\]
and the model-induced feature distribution is
\[
    P = (P_1,\ldots,P_C) \in \Delta_C.
\]
Equivalently, $Q_i$ is the probability that a target feature equals category $i$, and $P_i$ is the probability that a model sample has feature category $i$. We represent category $i$ by the one-hot vector $\mbe_i \in \mathbb{R}^C$ and use the dot-product kernel
\[
    k(\mbe_i,\mbe_j) = \mbe_i^\top \mbe_j = \mathbbm{1}\{i=j\}.
\]
The population version of the $\alpha$-weighted MMD objective is then
\begin{align}
    \mathcal{L}_\alpha(P;Q)
    &=
    \alpha \E_{\mbz,\mbz' \iid \sim  P}
        \left[k(\mbz,\mbz')\right]
    -
    2 \E_{\mbz \sim P, \mby \sim Q}
        \left[k(\mbz,\mby)\right]
    +
    \E_{\mby,\mby' \iid \sim Q}
        \left[k(\mby,\mby')\right] \\
    &=
    \alpha \sum_{i=1}^C P_i^2
    -
    2 \sum_{i=1}^C P_i Q_i
    +
    \sum_{i=1}^C Q_i^2 .
\end{align}
Completing the square gives
\begin{align}
    \mathcal{L}_\alpha(P;Q)
    &=
    \alpha
    \sum_{i=1}^C
    \left(P_i - \frac{Q_i}{\alpha}\right)^2
    +
    \left(1-\frac{1}{\alpha}\right)
    \sum_{i=1}^C Q_i^2 .
\end{align}
The second term is constant with respect to $P$, and $\alpha>0$, so the optimizer over categorical distributions is
\[
    P^\star
    =
    \argmin_{P \in \Delta_C}
    \left\|
        P - \frac{Q}{\alpha}
    \right\|_2^2
\]
This problem is known as simplex projection and has the closed form solution \citep{duchi2008efficient}
\[
    P_i^\star
    =
    \left(
        \frac{Q_i}{\alpha} - \nu
    \right)_+,
\]
where $(u)_+ := \max\{u,0\}$ and $\nu$ is chosen so that
$\sum_i P_i^\star = 1$. Equivalently, defining
$\tau_\alpha := \alpha \nu$, we can write
\[
    P_i^\star
    =
    \frac{(Q_i-\tau_\alpha)_+}{\alpha},
    \qquad
    \sum_{i=1}^C (Q_i-\tau_\alpha)_+ = \alpha .
\]
When $\alpha=1$, we may take $\tau_\alpha=0$, giving $P^\star=Q$. When $\alpha<1$, the optimizer subtracts a common threshold from the target probabilities, sets categories below that threshold to zero, and rescales the remaining mass by $1/\alpha$. Therefore, in this categorical setting, reducing $\alpha$ does not target the original distribution $Q$ exactly. Instead, it produces a thresholded sharpening of $Q$, concentrating probability mass on higher-probability categories.

\section{Practical usage recommendations}\label{app:usage_recommendations}
Our experiments suggest some practical guidance.
If the target features are continuous, then the energy-distance kernel will likely work well and has no extra hyperparameters.
We recommend doing a small sweep over the regularization weight $\lambda = \{10^{-1}, 10^{-2}, 10^{-3}\}$ while keeping self-repulsion $\alpha = 1$.
Lower $\lambda$ can make optimization unstable.
Trying $\alpha = \{0.25, 0.5\}$ can additionally improve distribution-matching performance as seen in our protein structure experiment.
For discrete-valued or otherwise non-Euclidean kernels, use domain-specific kernels such as the Tanimoto kernel.

We also recommend two scaling tricks to help hyperparameters like learning rate transfer across $\lambda$ and kernel choices.
First, rescale the MMD and KL-regularization terms so that their ratio remains $\lambda$, but the overall scale of the objective stays roughly constant via \Cref{eq:reweighted_objective}.
Second, scale the kernel values so that the initial squared MMD between pretrained generations and target features is fixed.

\section{Additional experimental details}\label{app:additional_details}
In this section, we provide additional information about each of our experiments, including how we compute log-probabilities and how we define the feature function $\mbh$ and the feature target samples $\ytarg_{1:N}$.

All experiments use Adam with momentum hyperparameters $\beta = (0.9, 0.999)$ and either a cosine or linear decay learning rate schedule.
We train all models on either a single H100 or L40S GPU with at most 4 CPUs.
Genie 2 experiments take 30 hours per finetuning run on L40S and G2PT runs take one hour.

\subsection{Protein secondary structure distribution matching}

\paragraph{Features and kernels.}
Following \citet{smith2025calibrating}, we use the Biotite package (\url{https://www.biotite-python.org/latest/index.html}) to annotate secondary structure.

For \methodname, we use the proportion of alpha-helix and beta-strand residues to compute a two-vector of features.
These are compared using the energy-distance kernel (\Cref{eq:energy_kernel}).

For \CGM, we use the bivariate quantile features implemented by \citet{smith2025calibrating}.
The quantile bins are initialized using the empirical quantiles from the CATH target dataset.

\paragraph{Sampling and log-probabilities.}
All optimization hyperparameters and score-function gradient computations were taken from the code of  \citet{smith2025calibrating} (\url{https://github.com/smithhenryd/cgm/blob/main/genie2/calibrate_genie2.py}).
In particular, see their appendix section on diffusion models for a more detailed treatment than follows.

For completeness, we briefly outline the approach to sampling and computing score-function gradients.
We use Euler-Maruyama sampling with 100 sampling steps.
Each sampling step $i$ consists of estimating a drift $\mbb_\theta(\mbx_i, i / 100)$ using the model, then sampling the next time $\mbx_{i+1}$ from a Gaussian with mean and variance determined by the drift and noise schedule.
Therefore the log-probability of the sampling trajectory can be decomposed into a sum of log probabilities of Gaussians with means determined by the model weights, enabling tractable gradient computations.

The decomposition over sampling steps also enables accumulating gradients over time steps, avoiding holding the entire computational graph in memory, making finetuning tractable on one GPU even with $100$ sampling steps.

\paragraph{Additional training and evaluation details.}
Following \citet{smith2025calibrating}, we train for 100 epochs with batch size 64 and a learning rate that decays linearly from $10^{-5}$ to $10^{-7}$ and sampling noise level $\sigma = 0.5$ over 100 sampling steps.
Each batch is split into 20 temporal sampling chunks by 32 batch dimension chunks.

For the symmetrized KL evaluation metric, we compare two empirical bivariate feature distributions using kernel density estimates. 
Each sample was represented by two
features taking values in $[0,1]^2$, such as secondary-structure proportions.
Given samples from two distributions, we fit two Gaussian kernel density estimates using
\texttt{scipy.stats.gaussian\_kde} with default bandwidth selection rule, Scott's rule.
We then approximated the KL divergence by deterministic numerical integration
on a uniform $50\times50$ grid over $[0,1]^2$.

\subsection{Antibiotics feature distribution matching}\label{subsec:abx_app}

\paragraph{Sampling and log-probabilities.}
Sampling and log-probabilities for the G2PT model proceed in the standard way for autoregressive models.
That is, tokens are sequentially sampled according to $\mbx_i \sim p_\theta(\mbx_i \mid \mbx_{{<}i})$, where $\mbx_{{<}i}$ indicates the previous $i - 1$ tokens in the sequence.
Autoregressive log-probabilities are the same as used for training, 
\[
    \logp_\theta(\mbx) = \sum_{i=1}^T \log p_\theta(\mbx_i \mid \mbx_{{<}i}),
\]
where $T$ is the sequence length including an end-of-sequence token.
This can be computed with one forward pass since G2PT is an autoregressive transformer.

\paragraph{Kernel rescaling.}
In order to make it easier to share hyperparameters across all three target features, we rescale each kernel during training so that the MMD$^2$ is $10$ for the pretrained model.
We do this by sampling five batches from the pretrained model, computing average MMD$^2$ against the target dataset for each batch $\overline{\operatorname{MMD}^2}[\mby_{1:M},\ytarg_{1:N}]$, then defining 
\[
    k'(\mbx, \mby) = \frac{10 k(\mbx, \mby)}{\overline{\operatorname{MMD}^2}[\mby_{1:M}, \ytarg_{1:N}]}.
\]

\paragraph{Features and kernels.}

For all features, chemically invalid generations are mapped to the zero vector.

For fingerprint embeddings, we use RDKit \citep{greg_landrum_rdkit} to compute Morgan fingerprints with 256 bits and radius two and use the Tanimoto kernel (\Cref{eq:tanimoto_kernel}).

For scaffold embeddings, we use RDKit to extract Murcko generic scaffolds, then convert those to SMILES strings.
Those SMILES are then passed to \texttt{fcd\_torch} (\url{https://github.com/insilicomedicine/fcd_torch}) in order to get ChemNet embeddings.
Chemically invalid generations are mapped to the zero vector.
Finally, we use PCA to reduce the embedding dimension to 64.
The PCs are fit on the target antibiotics dataset embeddings.

For our molecular descriptor features, we use RDKit to compute molecular weight, logP, topological polar surface area, number of rotatable bonds, QED, synthetic accessibility score, fraction of sp$^3$ carbons, numbers of hydrogen-bond donors and acceptors, number of aromatic rings, and heavy atom count.
We divide each by its empirical standard deviation in the antibiotics target set so they are all roughly equally weighted, but do not zero-center them.

In \Cref{fig:murcko_match} we compare the distance between embeddings that share a generic Murcko scaffold against embeddings with different scaffolds.
ChemNet embeddings of Murcko scaffolds (with and without PCA to 64 components) compared via Euclidean distance both have distance zero when scaffolds are shared by construction, but also show a wide range of distances for distinct scaffolds.
Morgan fingerprints with Tanimoto distance (one minus Tanimoto similarity) do not separate by scaffold identity.

\begin{figure}[ht]
  \vskip 0.2in
  \begin{center}
    \centering
    \includegraphics[width=\textwidth]{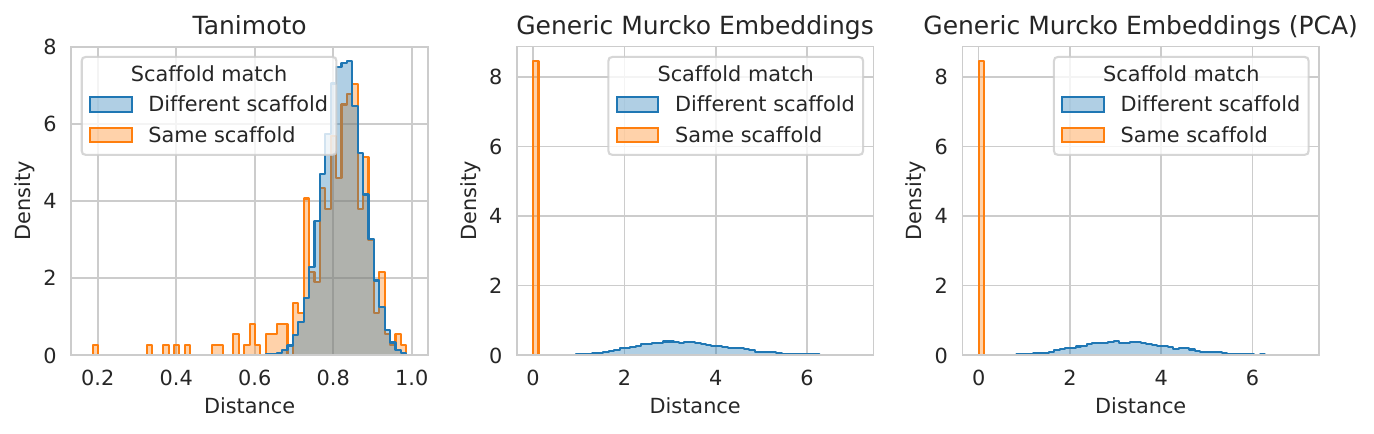}
    \caption{
Comparison of molecular similarity measures for identifying shared generic Murcko scaffolds. Histograms show pairwise distances for molecule pairs with the same generic scaffold and with different scaffolds. Morgan fingerprint Tanimoto distance only weakly separates these groups, while distances in the generic Murcko embedding space separate them much more clearly. PCA preserves this separation, suggesting that dimensionality reduction has little effect on scaffold identification.
    }
    \label{fig:murcko_match}
  \end{center}
\end{figure}

\paragraph{Additional training and evaluation details.}
For each finetuning method, including the \CGM and direct finetuning baselines, we train for 500 epochs with batch size 192 and evaluate with 9,600 samples.
We use a cosine-annealing-with-warm-up learning rate schedule with 25 warm-up epochs and final learning rate of $0.01$ times the initial.
We tried learning rates $5\times 10^{-6}, 1\times10^{-5}, 5\times 10^{-5}$ for each method and took the learning rate for each with the lowest distribution-matching loss for the Morgan fingerprint objective averaged over runs with $\lambda = 10$ and $\lambda = 0.01$ to ensure the learning rate worked for high and low regularization weights.
Towards this end, we found it helpful to normalize the regularization and MMD objectives' weightings such that their ratio was preserved while keeping the absolute scale of the loss roughly conserved across regularization weights.
This resulted in the reweighted objective
\begin{equation}\label{eq:reweighted_objective}
    \mathcal{L}_\theta[k, \ytarg_{1:N}] = \frac{1}{1 + \lambda}\E_{\mbx_{1:M} \overset{i.i.d.}{\sim} p_{\theta}}[\widehat{\operatorname{MMD}}^2_k[\mby_{1:M}, \ytarg_{1:N}]] + \frac{\lambda}{1 + \lambda} \operatorname{D_{KL}}(p_\theta || \ppre).  
\end{equation}
This resulted in all methods having optimal performance at learning rate $10^{-5}$.

\paragraph{Direct finetuning baseline.}
The direct finetuning baseline uses the full antibiotic molecules as target samples in the original sample space, rather than using only feature-space samples.
Starting from the same pretrained G2PT model as \methodname and \CGM, we finetune by minimizing the standard autoregressive negative log-likelihood of target antibiotic token sequences denoted $\mathcal{D}_{\mathrm{abx}}$,
\[
    \mathcal{L}_{\mathrm{sup}}(\theta)
    =
    - \mathbb{E}_{\mbx \sim \mathcal{D}_{\mathrm{abx}}}
    \left[\logp_\theta(\mbx)\right]
    +
    \lambda D_{\mathrm{KL}}(p_\theta \| p_{\theta_{\mathrm{pre}}}),
\]
where $\logp_\theta(\mbx)$ is the G2PT sequence log-probability.
We sweep the same KL-regularization weights used for \methodname and \CGM so that all methods are compared at matched KL-to-pretrained budgets.
Unlike \methodname and \CGM, this baseline requires full target molecules and trains the model to imitate the small antibiotic set directly.

\paragraph{Antibiotics target dataset.}
We constructed the antibiotic target set used in the G2PT experiments from the set of known antibiotics from \citet{scalia2025deep}.
We process this set to produce a target set of SMILES strings that are unique, chemically valid, and compatible with
the exact molecular representation pipeline used during G2PT finetuning and
calibration. 
To do this, we iterated through the input SMILES records and split any
multi-component entry at the ``.'' separator, so that mixtures or salts contributed one row per molecular fragment rather than being treated as a single composite molecule. 
Each fragment was then parsed with RDKit and canonicalized with
isomeric information removed, so duplicates differing only by SMILES syntax or stereochemical annotation collapsed to a common canonical representation. 
Fragments that could not be parsed by RDKit were discarded.

After canonicalization, each candidate SMILES string was filtered by running it through the same preprocessing steps used by the downstream G2PT pipeline. 
In particular, the molecule was converted into the graph representation used by G2PT, linearized with the breadth-first-search serialization used by the model, and then tokenized with the pretrained GuacaMol G2PT tokenizer. 
A candidate was retained only if all of these steps succeeded, the number of atoms did not exceed the tokenizer's supported indexed-atom vocabulary, and the resulting token sequence fit within the tokenizer's maximum input length. This ensures that every molecule is  processable by the tokenizer used in the experiments. Finally, exact duplicate canonical SMILES were removed.

\subsection{Generating DNA with realistic cell-type-specific activity profiles}\label{app_subsec:dna}

\paragraph{Masked discrete diffusion model pretraining.}
We pretrained a conditional masked discrete diffusion model on DeepMEL2 enhancer
sequences. DNA sequences were represented as categorical tokens over the four bases, with an additional absorbing mask token used by the diffusion process.
The model was trained for 1,000 epochs on the DeepMEL2 training split on a single L40S GPU with batch size
256, learning rate $5\times 10^{-4}$, Adam optimization, and validation-loss checkpointing. 
The run used a one-dimensional convolutional sequence model with 512 hidden channels, dropout 0.1, and one dilation block.

The architecture follows a ChromBPNet-style dilated convolutional design \citep{pampari2025chrombpnet}. Input
tokens are one-hot encoded over the augmented alphabet
$\{\mathrm{A},\mathrm{C},\mathrm{G},\mathrm{T},\mbm\}$ and passed through an
initial convolution where $\mbm$ denotes the mask token. The model then applies a residual stack of one-dimensional
convolutional blocks with dilation pattern $(1,1,4,16,64)$. Each block uses channel-wise layer normalization, a same-padded dilated convolution, ReLU nonlinearity, and dropout. A final $1$D convolution maps the hidden
representation to logits over the four unmasked DNA bases at each position.

The model is conditional on the DeepMEL2 cell-topic annotation vector. The conditioning vector is passed through a small multilayer perceptron consisting
of layer normalization, a linear layer to the model hidden dimension, a ReLU,
and a second linear layer. 
The resulting embedding is added to the sequence hidden representation at the beginning of each dilation block, so the same condition vector can globally modulate predictions at all sequence positions.

Training uses the masked diffusion denoising objective. For each minibatch, a diffusion time $t\in[0,1]$ is sampled using a low-discrepancy grid with a random
offset \citep{kingma2021variational}. 
Given a clean sequence $\mbx_0$, the forward process independently keeps
each token with probability
\[
    \alpha(t) = 1 - \cos\left(\frac{\pi}{2}(1-t)\right)
\]
and replaces it by the mask token $\mbm$ otherwise. The network receives the corrupted
sequence $\mbx_t$ and the condition vector, and is trained to predict the
original base at masked positions. The per-sequence loss is a weighted masked
cross entropy,
\[
    \mathcal{L}(\theta)
    =
    \sum_{j : x_t[j]=\mbm}
    w(t)\,
    \mathrm{CE}\left(
        x_0[j],
        \pi_\theta^{(j)}(\cdot \mid \mbx_t,\mbc)
    \right),
\]
normalized by sequence length. We use the cosine diffusion loss weighting
\[
    w(t)
    =
    \frac{\pi}{2}
    \tan\left(\frac{\pi}{2}(1-t)\right),
\]
clamped in the implementation to avoid extreme weights.

\paragraph{AlphaGenome target features and kernel.}
For distributional calibration, we used AlphaGenome predictions as target
features. 
Because AlphaGenome requires a minimum input length of 2{,}048 base pairs, each DeepMEL2 enhancer sequence was inserted into the center of a fixed random DNA
background sequence before calling AlphaGenome. 
The same insertion procedure was used for both real target sequences and generated sequences, so the calibrated
features reflect changes in the enhancer region rather than changes in sequence context. 
The random background was generated once with base probabilities $(P_A = 0.2910, P_C = 0.2085, P_G = 0.2087, P_T = 0.2918)$ corresponding roughly to frequencies seen in regulatory DNA and then held fixed across feature evaluations.

For each sequence, we used the AlphaGenome ATAC predictions as enhancer activity features. The feature vector was defined by taking the maximum predicted ATAC value over genomic position for each retained ATAC channel, i.e. the peak height of each channel. 
We then applied a $\log(1+x)$ transform to the peak-height features, which made the marginal feature distributions closer to Gaussian and reduced the influence of very large peaks.

The target distribution for each cell topic was constructed from real DeepMEL2 sequences positive for that topic. We used up to 256 target sequences per topic when precomputing the AlphaGenome feature cache. During calibration, each update selects one topic and samples condition vectors from the training examples positive for that topic. 
Generated sequences are then compared to the corresponding topic-specific target feature distribution.

For the calibration objective, the log-transformed AlphaGenome features were projected using PCA fit on the training target features. 
We used 32 principal components with whitening; these components explained more than 99\% of the variance in the log-transformed target features. 
The PCA-whitened features were
used for the MMD calibration objective, while final reported feature metrics were computed on the non-PCA log-transformed AlphaGenome features.

For \methodname and \CGM, calibration was run for 20 epochs with batch size 128, 50 diffusion sampling steps, learning rate $10^{-4}$, cosine learning-rate decay to 10\% of the initial learning rate, and three random seeds. The final
validation evaluation generated 512 samples per condition.

\paragraph{Sampling method.}
For a fixed conditioning vector $\mbc$, the sampler starts from a
fully masked sequence $\mbx(0)$ of length $L$ and progressively unmasks tokens.
Let $\mbm$ denote the mask token and let
\[
    M(i-1) = \{j : \mbx(i-1)[j] = \mbm\}
\]
be the masked positions before step $i$.

The implementation uses a cosine unmasking schedule. For a nominal number of
sampling steps $T$, define
\[
    r(i) = \operatorname{round}\left(
        L \cos\left(\frac{\pi}{2}\frac{i}{T}\right)
    \right), \qquad i = 0,\ldots,T .
\]
The number of positions unmasked at step $i$ is
\[
    k_i = r(i-1) - r(i),
\]
with zero-sized steps omitted in the implementation. At each nonzero step:

\begin{enumerate}
    \item Select a subset $U(i) \subset M(i-1)$ with $|U(i)| = k_i$ uniformly
    at random without replacement.
    \item Evaluate the conditional model on the current partially
    masked sequence and condition vector $\mbc$. This gives logits
    $a_\theta^{(j)}(\cdot \mid \mbx(i-1), \mbc)$ over the four DNA bases at
    each position $j$.
    \item Convert logits to categorical probabilities,
    \[
        \pi_{\theta}^{(j)}(\cdot \mid \mbx(i-1), \mbc)
        =
        \operatorname{softmax}\left(
            a_\theta^{(j)}(\cdot \mid \mbx(i-1), \mbc)
        \right),
    \]
    and sample each newly unmasked base independently:
    \[
        \mbx(i)[j] \sim
        \pi_{\theta}^{(j)}(\cdot \mid \mbx(i-1), \mbc),
        \qquad j \in U(i).
    \]
\end{enumerate}

All other positions are left unchanged. Thus the chain begins fully masked and
ends at a fully specified DNA sequence.

\paragraph{Transition probabilities.}
Conditioned on $\mbc$, the initial distribution is deterministic:
\[
    \pi_0(\mbx(0)) =
    \mb{1}\{\mbx(0) \text{ is fully masked}\}.
\]
For step $i$, the transition probability factors into a position-selection term
and token-sampling terms:
\[
    p_\theta(\mbx(i) \mid \mbx(i-1), \mbc)
    =
    C_i
    \prod_{j \in U(i)}
    \pi_{\theta}^{(j)}
    \left(\mbx(i)[j] \mid \mbx(i-1), \mbc\right),
\]
where
\[
    C_i = \binom{|M(i-1)|}{k_i}^{-1}
\]
if the within-step unmasking set is treated as unordered. This constant accounts
for choosing which masked positions to unmask. It does not depend on $\theta$.
Since the number of masked positions is fixed by the schedule, $C_i$ is also the
same across trajectories of the same length.

\paragraph{Trajectory log-probability.}
As in the Genie 2 setting, the score-function gradient is computed using the full sampling trajectory
\[
    \mbx := \bigl(\mbx(0), \mbx(1), \ldots, \mbx(S)\bigr),
\]
where $\mbx(S)$ is the final generated DNA sequence and $S$ is the number
of unmasking steps.
The conditional trajectory log-probability is
\[
\begin{aligned}
    \logp_\theta(\mbx \mid \mbc)
    &=
    \log \pi_0(\mbx(0))
    +
    \sum_{i=1}^{S}
    \log p_\theta(\mbx(i) \mid \mbx(i-1), \mbc) \\
    &=
    \sum_{i=1}^{S} \log C_i
    +
    \sum_{i=1}^{S}
    \sum_{j \in U(i)}
    \log
    \pi_{\theta}^{(j)}
    \left(\mbx(i)[j] \mid \mbx(i-1), \mbc\right).
\end{aligned}
\]
We use only the second term, since the $\log C_i$ terms are constant with respect
to model parameters.

\paragraph{Parameter gradients.}
The gradient of the trajectory log-probability is therefore
\[
    \nabla_\theta \logp_\theta(\mbx \mid \mbc)
    =
    \sum_{i=1}^{S}
    \sum_{j \in U(i)}
    \nabla_\theta
    \log
    \pi_{\theta}^{(j)}
    \left(\mbx(i)[j] \mid \mbx(i-1), \mbc\right).
\]
During finetuning, trajectories are sampled without gradients, then each sampling step is replayed under the current policy to compute the corresponding log-probability contribution.
This lets the REINFORCE-style calibration gradient be accumulated one diffusion step at a time rather than attempting to backpropagate through the full sampling procedure.

\subsection{Overview figure: a smile made of SMILES}
We briefly outline the G2PT finetuning setup we use to produce \Cref{fig:overview}.

\paragraph{Features, kernel, and target features.}
For the smiley-face calibration experiment, we define the target distribution in a
two-dimensional standardized descriptor space,
\[
z = \bigl(z_{\mathrm{MolWt}}, z_{\mathrm{MolLogP}}\bigr).
\]
The standardization is computed from samples drawn from the pretrained base model.
We decode a presampled set of molecules, retain the valid ones, and compute the
coordinate-wise mean $\mu$ and standard deviation $\sigma$ of
$(\mathrm{MolWt}, \mathrm{MolLogP})$. A valid molecule with raw descriptor vector
$\mbx$ is then mapped to
\[
z = \frac{x - \mu}{\sigma}.
\]
Invalid molecules are assigned the placeholder value $(0,0)$. The synthetic smiley
target is constructed directly in this standardized space.

Given a target sample size $n$, we allocate
$n_{\mathrm{left}} = \lfloor 0.12 n \rfloor$ points to the left eye,
$n_{\mathrm{right}} = \lfloor 0.12 n \rfloor$ points to the right eye, and assign
the remaining points to the mouth. The two eyes are sampled as isotropic Gaussian clusters with
centers
\[
c_{\mathrm{left}} = (-0.8,\, 0.75),
\qquad
c_{\mathrm{right}} = (0.8,\, 0.75),
\]
and standard deviation $0.16$ in each coordinate. Equivalently,
\[
X_{\mathrm{left}} \sim \mathcal{N}\!\left(c_{\mathrm{left}},\, 0.16^2 I_2\right),
\qquad
X_{\mathrm{right}} \sim \mathcal{N}\!\left(c_{\mathrm{right}},\, 0.16^2 I_2\right).
\]

The mouth is sampled from a noisy lower arc. We first draw
\[
\theta \sim \mathrm{Unif}(210^\circ,\, 330^\circ),
\]
which traces the lower portion of a circle. We then form the corresponding point on
a radius-$1$ arc centered at $(0, 0.15)$,
\[
m(\theta) = (0,\,0.15) + (\cos\theta,\, \sin\theta),
\]
and add isotropic Gaussian perturbations with standard deviation $0.12$:
\[
X_{\mathrm{mouth}} = m(\theta) + \varepsilon,
\qquad
\varepsilon \sim \mathcal{N}(0,\, 0.12^2 I_2).
\]

After concatenating the two eye clouds and the mouth samples, we remove any point
whose Euclidean norm is smaller than $0.35$, i.e.,
\[
\|z\|_2 < 0.35.
\]
Rejected points are replaced by newly drawn samples until exactly $n$ target points
are obtained. This rejection step creates a small hole around the origin, which is
useful because invalid generated molecules are also mapped to $(0,0)$ under the
descriptor representation. The resulting empirical target distribution therefore
consists of two Gaussian eye clusters, a noisy lower arc for the smile, and an
empty region near the origin.

We use the energy-distance kernel for the MMD objective (\Cref{eq:energy_kernel}).

\paragraph{Training settings.}
We train for $500$ epochs with $\lambda = 10^{-4}$, $\alpha=1$, learning rate $2 \times 10^{-5}$, and batch size 2,048 split into four chunks to reduce the memory footprint.

\subsection{One-dimensional self-repulsion demonstration}\label{app_subsec:self_wt_demo}

\begin{figure}[h]
\centering
\includegraphics[width=0.6\linewidth]{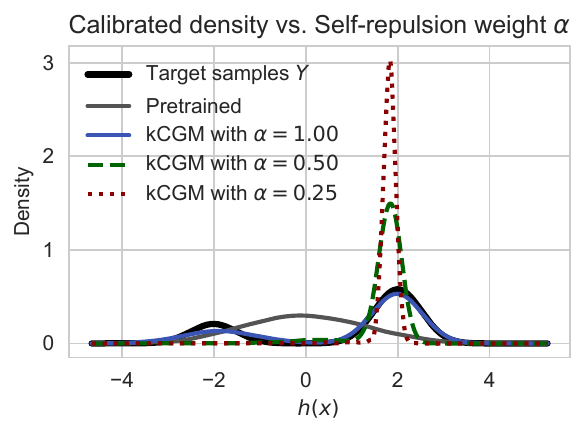}
\vspace{-0.8em}
\caption{
    Effect of self-repulsion weight $\alpha$ in a synthetic bimodal distribution-matching task.
    Smaller $\alpha$ weakens self-repulsion and produces more mode-seeking behavior.
}
\label{fig:self_wt_demo}
\end{figure}

To illustrate the role of the self-repulsion term in \methodname, we constructed a one-dimensional synthetic example using a learnable two-component Gaussian mixture model.
The target distribution is a symmetric bimodal mixture with modes at
 $-2$ and $+2$ and standard deviation $0.35$, while the
initial model has the same number of equally weighted mixture components but starts with means $-0.75, 0.75$ and standard deviations $1.0$.
We use the identity feature map and the energy-distance kernel.

Target samples are obtained by drawing 4,000 points from the fixed target mixture, and the initial model is separately sampled before calibration to visualize its pre-finetuning distribution.

The demo then runs \methodname several times from the same initial model, varying only the self-repulsion weight over the values
$\{0.25, 0.5, 0.75, 1.0\}$. 
In each run, the model is trained for $300$ epochs with batch size $256$, regularization parameter $\lambda=10^{-2}$, and learning rate $3\times 10^{-2}$. After calibration, the script draws fresh samples from each fitted model, reports the final logged loss values, and visualizes the resulting densities with one-dimensional kernel density estimates.

\section{Additional results}

\begin{figure}[ht]
\centering
\includegraphics[width=\textwidth]{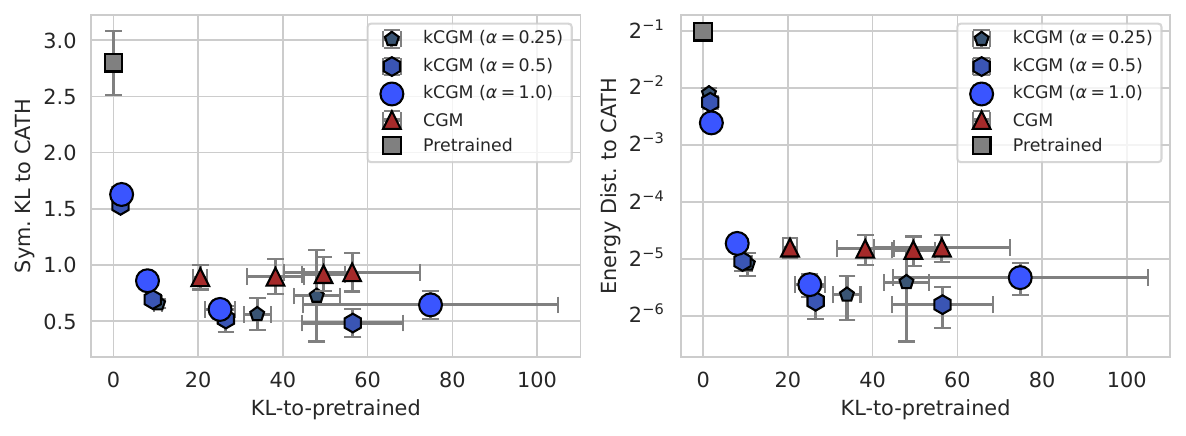}
    \caption{
Extended Genie 2 protein secondary-structure finetuning results for \methodname\ across all self-repulsion weights $\alpha$. The left panel reports symmetrized KL to the target CATH secondary-structure distribution, and the right panel reports energy distance to the same target distribution, both plotted against KL-to-pretrained. This figure extends \Cref{fig:genie2_results} by showing the full set of \methodname\ tradeoff curves obtained by varying $\alpha$.    }
    \label{fig:genie2_results_full}
\end{figure}

\subsection{Genie 2}
\Cref{fig:genie2_results_full} shows symmetrized KL and energy distance to the target CATH distribution of secondary structure proportions for each \methodname $(\lambda, \alpha)$ combination.
Both distribution-matching metrics result in the same ranking of finetuned models.

\subsection{G2PT}\label{app_subsec:g2pt_additional}

\Cref{fig:abx_other_features} repeats the experiment of \Cref{fig:abx_results} for scaffold and chemical descriptor features, showing that \methodname works for matching distributions from diverse feature extractors.

\begin{figure}
    \centering
    \includegraphics[width=\textwidth]{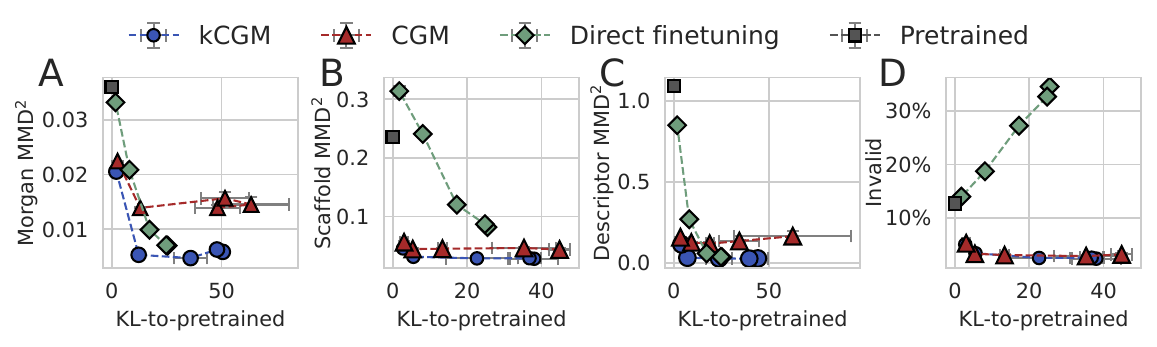}
    \caption{Comparison of direct finetuning, \methodname, and \CGM for adapting G2PT to the antibiotics
set for five values of the KL-regularization weight $\lambda$. Panels A–C evaluate separate \methodname and \CGM models trained to match the target Morgan fingerprint, Murcko scaffold, and descriptor distributions,
respectively, using the corresponding $\operatorname{MMD}^2$ metric. Panel D shows the fraction of chemically invalid samples for scaffold finetuned \methodname and \CGM and for direct finetuning. Direct finetuning does not depend on the feature target and is shared across all panels.}
    \label{fig:abx_other_features}
\end{figure}

\Cref{tab:g2pt_full} compares the methods for each target feature space with metrics interpolated to have equal KL-from-pretrained to evaluate a KL-constrained comparison.
Each \methodname model performs best on the features it was trained for, while transfer across feature spaces is asymmetric.
For example, Morgan finetuning transfers better to descriptor similarity than scaffold finetuning transfers to Morgan similarity, showing that the target feature spaces differ in granularity and therefore preserve different aspects of the antibiotic distribution.

\begin{table}[h]
\vspace{-1em}
\centering
\caption{
    Best metric values for runs with KL budget 25.
    Entries show linearly interpolated mean (SEM) at KL-to-pretrained equal to 25.
SEMs are interpolated by treating the two neighboring estimates as independent and
propagating variance under the linear interpolation weights. Bold indicates the best
method for each metric; underline indicates the second-best method.
}\label{tab:g2pt_full}
\begin{tabular}{lcccc}
\toprule
Method & Morgan MMD$^2$ & Scaffold MMD$^2$ & Descriptor MMD$^2$ & Validity \\
\midrule
kCGM Morgan & \textbf{0.005 (0.00017)} & \underline{0.039 (0.004)} & 0.15 (0.007) & 95\% (0.2\%) \\
kCGM Scaffold & 0.038 (0.00073) & \textbf{0.028 (0.0009)} & 1.2 (0.04) & \textbf{98\% (0.098\%)} \\
kCGM Descriptor & 0.024 (0.00028) & 0.13 (0.009) & \textbf{0.027 (0.0042)} & 96\% (0.2\%) \\
CGM Morgan & 0.014 (0.00019) & 0.15 (0.015) & 0.26 (0.012) & 92\% (0.21\%) \\
CGM Scaffold & 0.037 (0.00083) & 0.045 (0.00052) & 1.2 (0.054) & \underline{97\% (0.13\%)} \\
CGM Descriptor & 0.024 (0.00028) & 0.12 (0.0083) & 0.12 (0.0059) & 94\% (0.29\%) \\
Direct finetuning & \underline{0.007 (7e-05)} & 0.085 (0.0021) & \underline{0.036 (0.00049)} & 67\% (0.093\%) \\
\midrule
Pretrained & 0.036 (0.0002) & 0.24 (0.003) & 1.1 (0.00031) & 87\% (0.22\%) \\
\bottomrule
\end{tabular}
\end{table}

\section{Example generations}
\subsection{Genie 2}
We include uniform randomly selected samples from the pretrained model in \Cref{fig:genie_base_model_samples}, \CGM with its best $\lambda$ value in \Cref{fig:genie_CGM_model_samples}, and \methodname with its best $\lambda$, $\alpha$ combination in \Cref{fig:genie_kCGM_samples}.

\begin{figure}[h]
  \centering
  \includegraphics[width=\columnwidth]{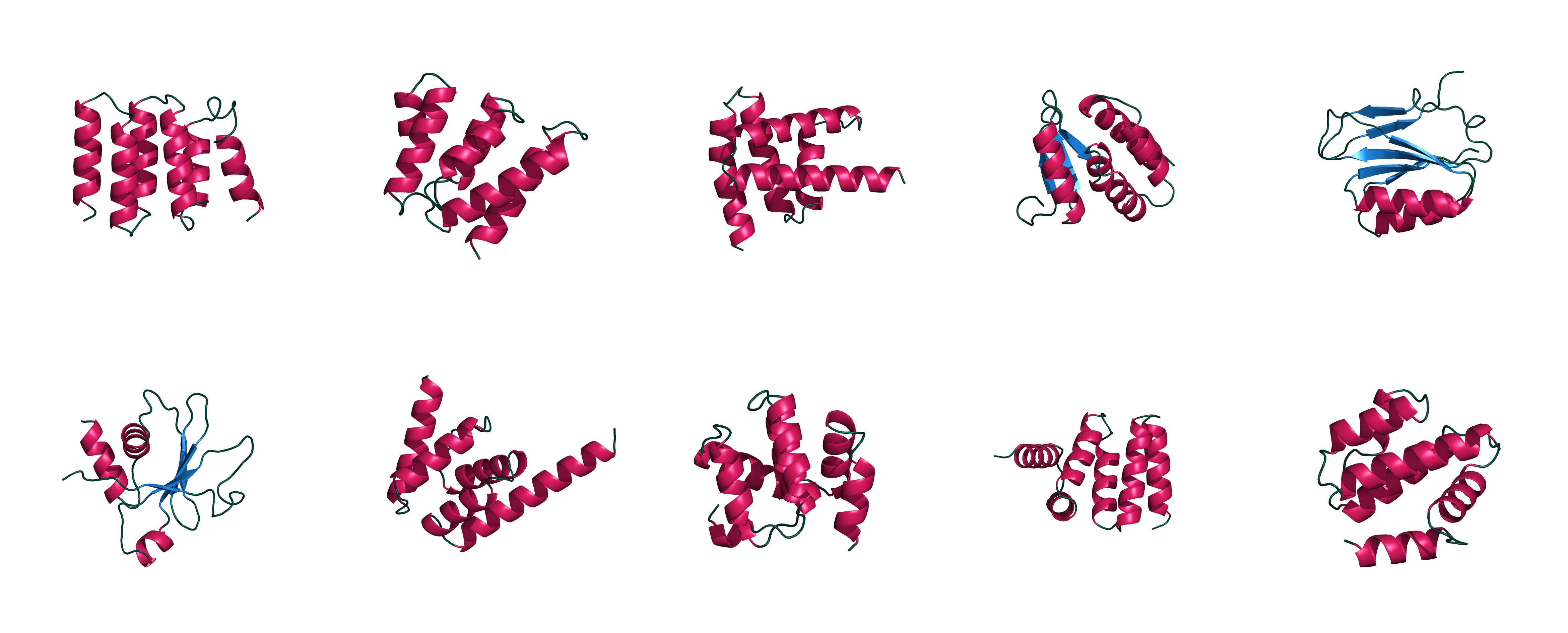}
  \caption{Genie 2 pretrained model samples.}
  \label{fig:genie_base_model_samples}
  \includegraphics[width=\columnwidth]{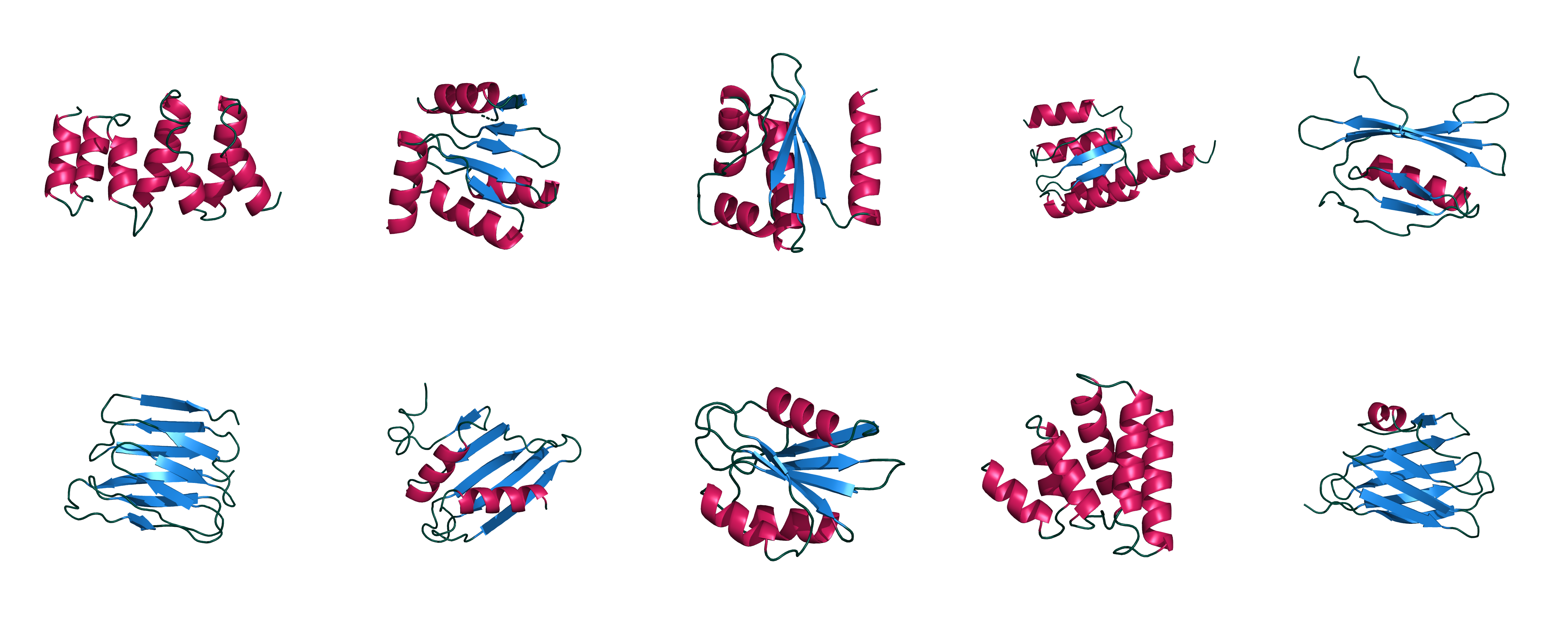}
  \caption{Genie 2 \CGM $\lambda=10^{-2}$.}
\label{fig:genie_CGM_model_samples}
  \includegraphics[width=\columnwidth]{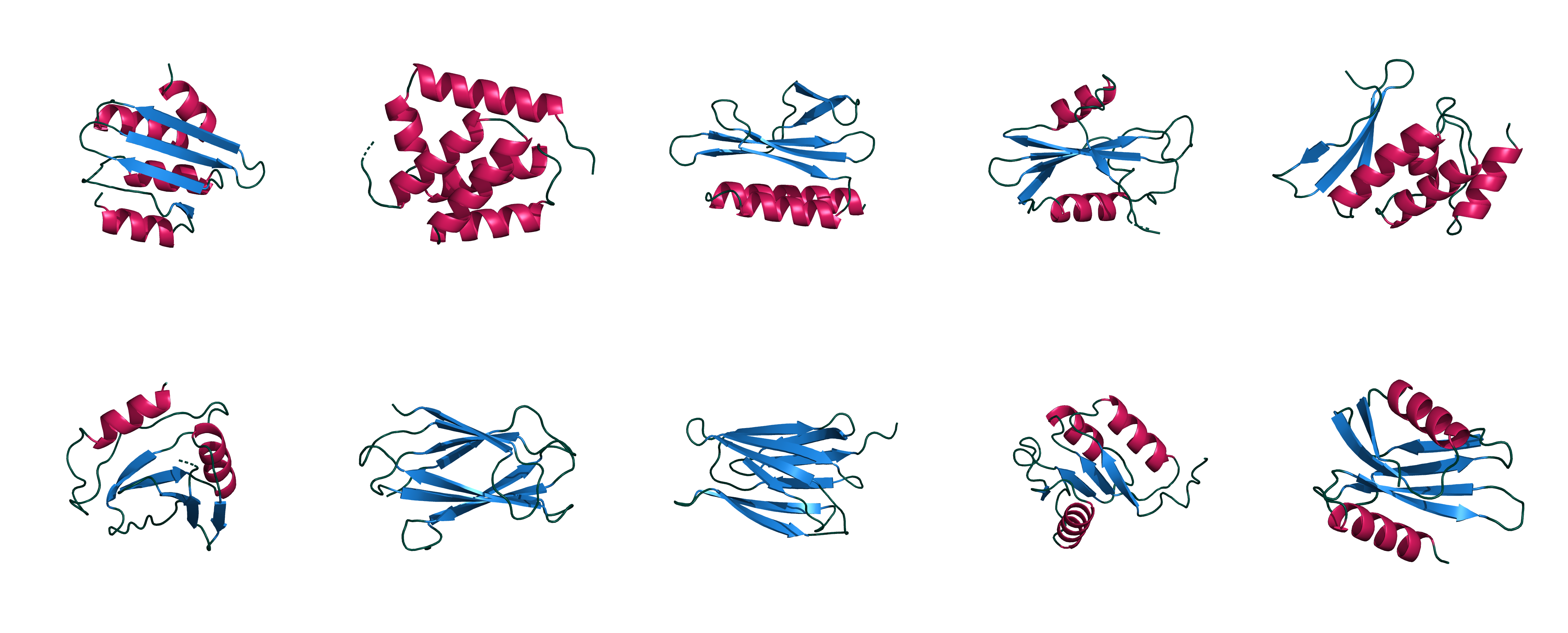}
  \caption{Genie 2 \methodname with $\lambda = 10^{-3}$, $\alpha = 0.25$.}
  \label{fig:genie_kCGM_samples}
  
\end{figure}

\subsection{G2PT}
We show example generations from the pretrained model (\Cref{fig:g2pt_base_model_samples}), the target antibiotics set (\Cref{fig:abx}) and from all the other finetuned models at regularization strength $\lambda=10^{-2}$.
\begin{itemize}[nosep]
    \item \Cref{fig:g2pt_kCGM_FP_model_samples}: \methodname with Morgan fingerprint features.
    \item \Cref{fig:g2pt_kCGM_scaffold_model_samples}: \methodname with generic Murcko scaffold features.
    \item \Cref{fig:g2pt_kCGM_desc_model_samples}: \methodname with descriptor features.
    \item \Cref{fig:g2pt_CGM_FP_model_samples}: \CGM with Morgan fingerprint features.
    \item \Cref{fig:g2pt_CGM_scaffold_model_samples}: \CGM with generic Murcko scaffold features.
    \item \Cref{fig:g2pt_CGM_desc_model_samples}: \CGM with descriptor features.
    \item \Cref{fig:g2pt_finetuned_model_samples}: Direct finetuning.
\end{itemize}

\begin{figure}[h]
  \centering
  
  \includegraphics[width=\columnwidth]{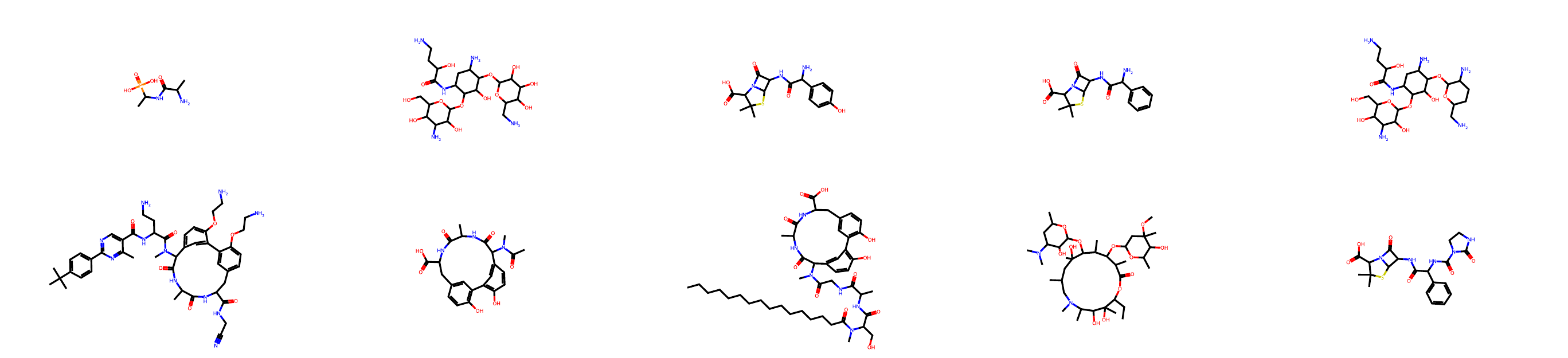}
  \caption{Antibiotics target set.}
  \label{fig:abx}
  
  \vspace{1em}
  
  \includegraphics[width=\columnwidth]{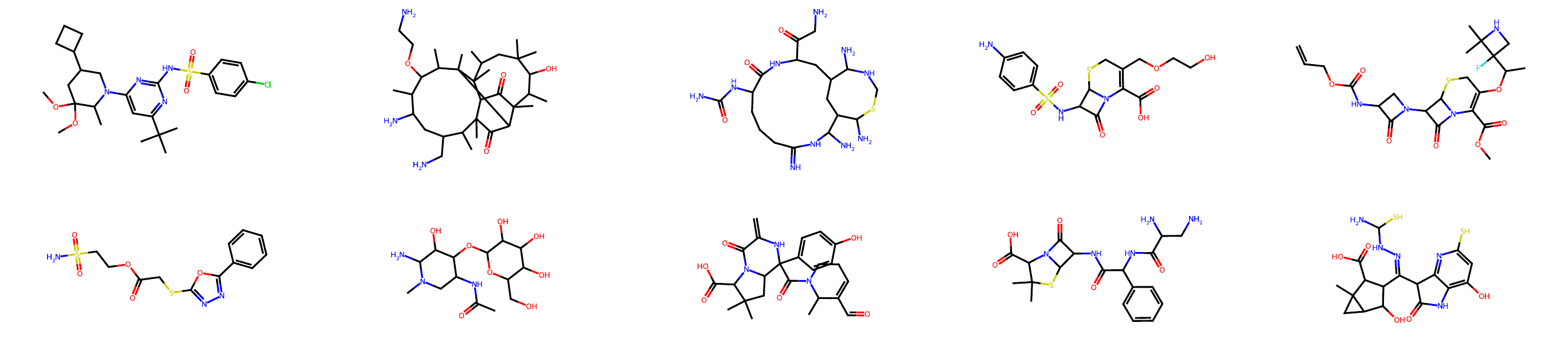}
  \caption{\methodname Morgan-FP tuned G2PT.}
  \label{fig:g2pt_kCGM_FP_model_samples}
  
  \vspace{1em}
  \includegraphics[width=\columnwidth]{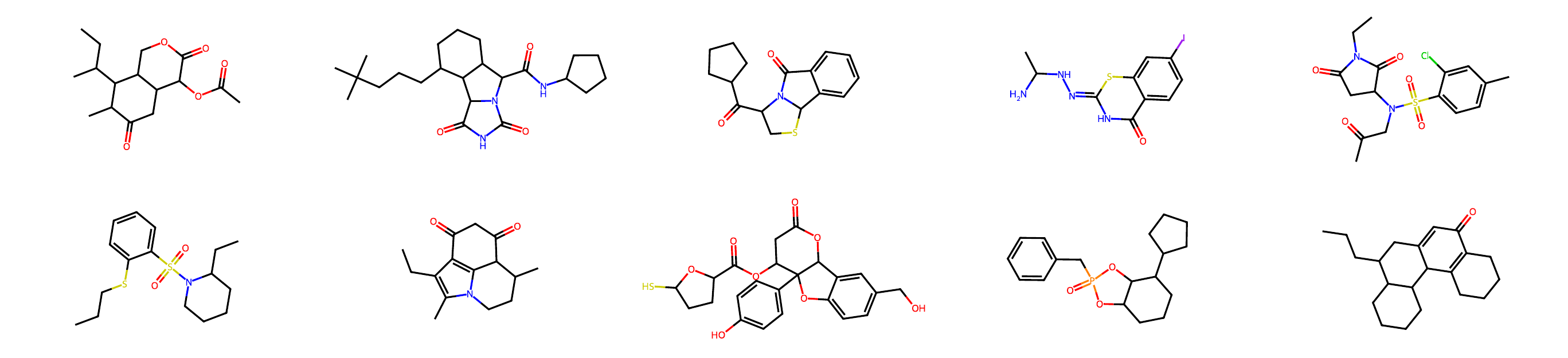}
  \caption{\methodname scaffold tuned G2PT.}
  \label{fig:g2pt_kCGM_scaffold_model_samples}

  \vspace{1em}
  \includegraphics[width=\columnwidth]{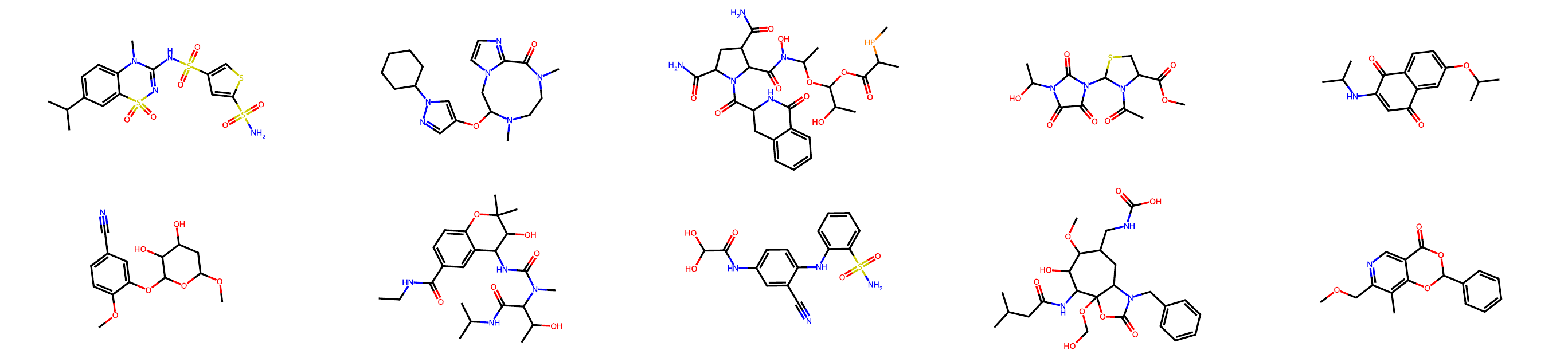}
  \caption{\methodname descriptor tuned G2PT.}
  \label{fig:g2pt_kCGM_desc_model_samples}
  
\end{figure}

\begin{figure}[t]
  \centering

  \includegraphics[width=0.9\columnwidth]{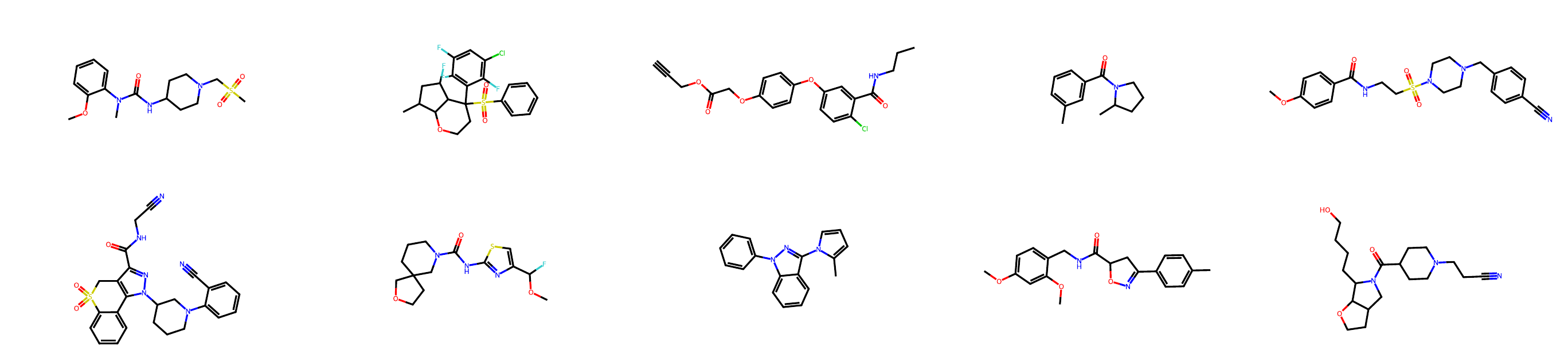}
  \caption{G2PT pretrained model samples.}
  \label{fig:g2pt_base_model_samples}
  
  \vspace{1em} 

  \includegraphics[width=0.9\columnwidth]{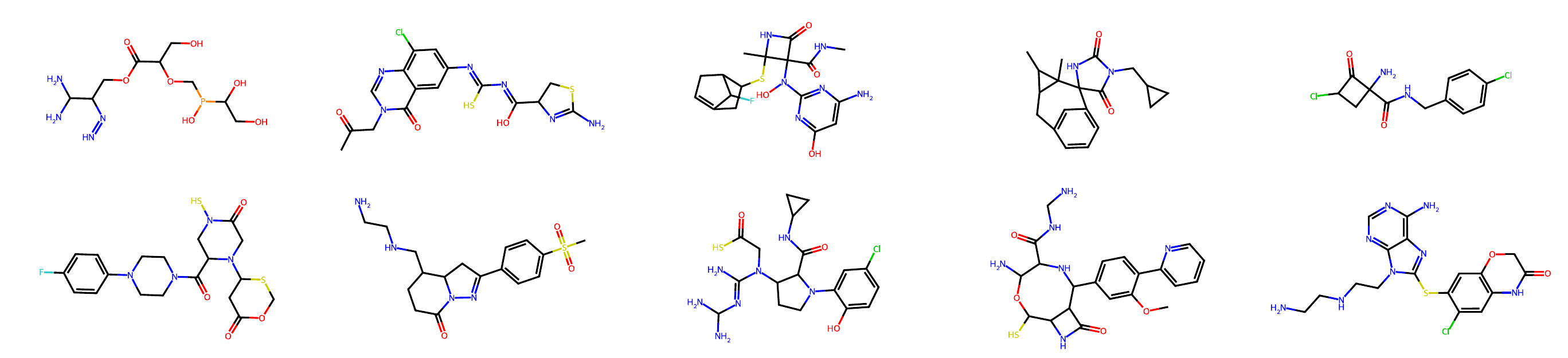}
  \caption{\CGM Morgan-FP tuned G2PT.}
  \label{fig:g2pt_CGM_FP_model_samples}
  
  \vspace{1em}
  
  \includegraphics[width=0.9\columnwidth]{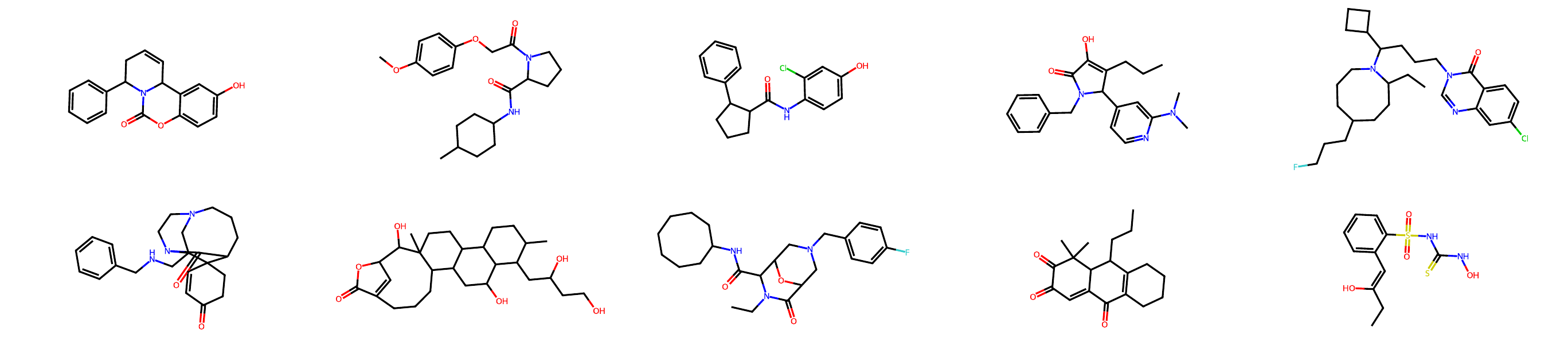}
  \caption{\CGM scaffold tuned G2PT.}
  \label{fig:g2pt_CGM_scaffold_model_samples}
  
  \vspace{1em}

  \includegraphics[width=0.9\columnwidth]{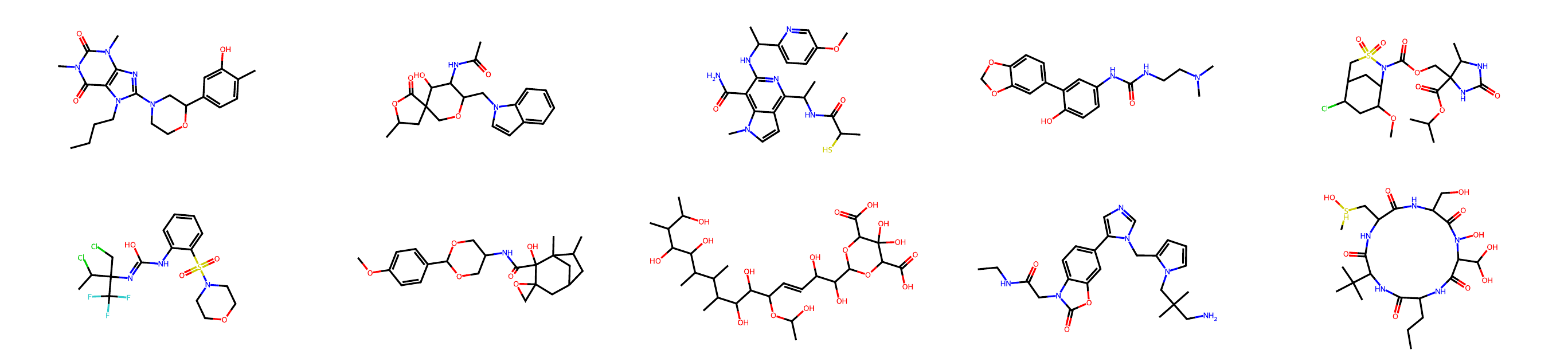}
  \caption{\CGM descriptor tuned G2PT.}
  \label{fig:g2pt_CGM_desc_model_samples}
  
  \vspace{1em}
  
  \includegraphics[width=0.9\columnwidth]{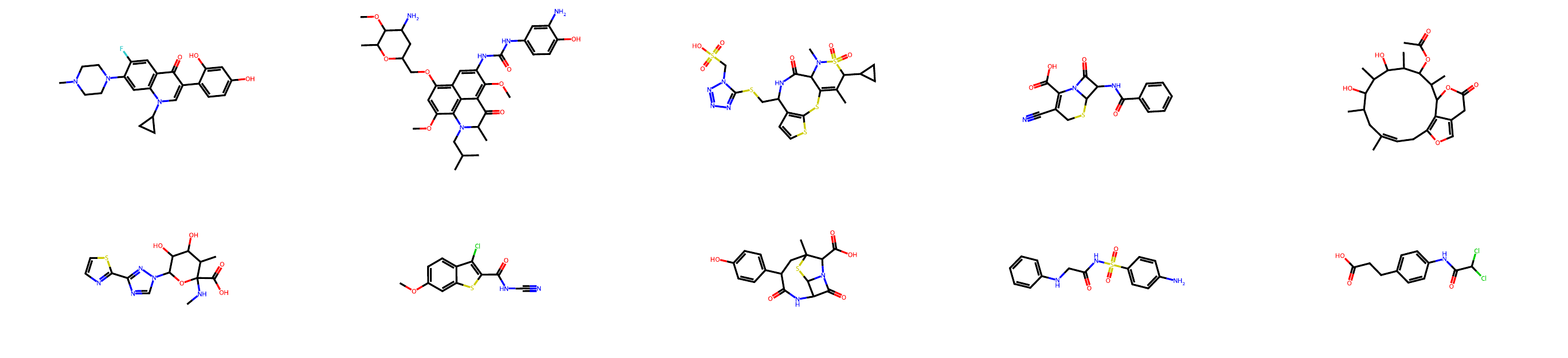}
  \caption{Direct finetuned G2PT.}
  \label{fig:g2pt_finetuned_model_samples}
\end{figure}



\end{document}